\PassOptionsToPackage{unicode}{hyperref}
\PassOptionsToPackage{hyphens}{url}
\PassOptionsToPackage{dvipsnames,svgnames,x11names}{xcolor}
\documentclass[
  12pt]{article}

\usepackage{amsmath,amssymb}
\usepackage{iftex}
\ifPDFTeX
  \usepackage[T1]{fontenc}
  \usepackage[utf8]{inputenc}
  \usepackage{textcomp} 
\else 
  \usepackage{unicode-math}
  \defaultfontfeatures{Scale=MatchLowercase}
  \defaultfontfeatures[\rmfamily]{Ligatures=TeX,Scale=1}
\fi
\usepackage{lmodern}
\ifPDFTeX\else  
\fi
\IfFileExists{upquote.sty}{\usepackage{upquote}}{}
\IfFileExists{microtype.sty}{
  \usepackage[]{microtype}
  \UseMicrotypeSet[protrusion]{basicmath} 
}{}
\makeatletter
\@ifundefined{KOMAClassName}{
  \IfFileExists{parskip.sty}{%
    \usepackage{parskip}
  }{
    \setlength{\parindent}{0pt}
    \setlength{\parskip}{6pt plus 2pt minus 1pt}}
}{
  \KOMAoptions{parskip=half}}
\makeatother
\usepackage{xcolor}
\makeatletter
\ifx\paragraph\undefined\else
  \let\oldparagraph\paragraph
  \renewcommand{\paragraph}{
    \@ifstar
      \xxxParagraphStar
      \xxxParagraphNoStar
  }
  \newcommand{\xxxParagraphStar}[1]{\oldparagraph*{#1}\mbox{}}
  \newcommand{\xxxParagraphNoStar}[1]{\oldparagraph{#1}\mbox{}}
\fi
\ifx\subparagraph\undefined\else
  \let\oldsubparagraph\subparagraph
  \renewcommand{\subparagraph}{
    \@ifstar
      \xxxSubParagraphStar
      \xxxSubParagraphNoStar
  }
  \newcommand{\xxxSubParagraphStar}[1]{\oldsubparagraph*{#1}\mbox{}}
  \newcommand{\xxxSubParagraphNoStar}[1]{\oldsubparagraph{#1}\mbox{}}
\fi
\makeatother

\usepackage{longtable,booktabs,array}
\usepackage{calc} 
\usepackage{etoolbox}
\makeatletter
\patchcmd\longtable{\par}{\if@noskipsec\mbox{}\fi\par}{}{}
\makeatother
\IfFileExists{footnotehyper.sty}{\usepackage{footnotehyper}}{\usepackage{footnote}}
\makesavenoteenv{longtable}
\usepackage{graphicx}
\makeatletter
\def\maxwidth{\ifdim\Gin@nat@width>\linewidth\linewidth\else\Gin@nat@width\fi}
\def\maxheight{\ifdim\Gin@nat@height>\textheight\textheight\else\Gin@nat@height\fi}
\makeatother
\setkeys{Gin}{width=\maxwidth,height=\maxheight,keepaspectratio}
\makeatletter
\def\fps@figure{htbp}
\makeatother

\makeatletter
\@ifpackageloaded{caption}{}{\usepackage{caption}}
\AtBeginDocument{%
\ifdefined\contentsname
  \renewcommand*\contentsname{Table of contents}
\else
  \newcommand\contentsname{Table of contents}
\fi
\ifdefined\listfigurename
  \renewcommand*\listfigurename{List of Figures}
\else
  \newcommand\listfigurename{List of Figures}
\fi
\ifdefined\listtablename
  \renewcommand*\listtablename{List of Tables}
\else
  \newcommand\listtablename{List of Tables}
\fi
\ifdefined\figurename
  \renewcommand*\figurename{Figure}
\else
  \newcommand\figurename{Figure}
\fi
\ifdefined\tablename
  \renewcommand*\tablename{Table}
\else
  \newcommand\tablename{Table}
\fi
}
\@ifpackageloaded{float}{}{\usepackage{float}}
\floatstyle{ruled}
\@ifundefined{c@chapter}{\newfloat{codelisting}{h}{lop}}{\newfloat{codelisting}{h}{lop}[chapter]}
\floatname{codelisting}{Listing}

\makeatother
\makeatletter
\@ifpackageloaded{caption}{}{\usepackage{caption}}
\@ifpackageloaded{subcaption}{}{\usepackage{subcaption}}
\makeatother

\ifLuaTeX
  \usepackage{selnolig}  
\fi
\usepackage[]{natbib}
\usepackage{bookmark}

\IfFileExists{xurl.sty}{\usepackage{xurl}}{} 
\hypersetup{
  pdftitle={Title},
  pdfauthor={Author 1; Author 2},
  pdfkeywords={3 to 6 keywords, that do not appear in the title},
  colorlinks=true,
  linkcolor={blue},
  filecolor={Maroon},
  citecolor={Blue},
  urlcolor={Blue},
  pdfcreator={LaTeX via pandoc}}

\newcommand{\anon}{1}

\usepackage{mathrsfs} 
\usepackage{amsmath} 
\usepackage{bm} 
\usepackage{amsthm} 
\newtheorem{theorem}{Theorem}
\newtheorem{lemma}{Lemma} 
\usepackage{booktabs} 
\usepackage{array} 
\usepackage{tabularx} 
\usepackage{booktabs}

\begin{document}

\def\spacingset#1{\renewcommand{\baselinestretch}%
{#1}\small\normalsize} \spacingset{1}


\if1\anon
{
  \title{\bf Multiclass Calibration Assessment and Recalibration of Probability Predictions via the Linear Log Odds Calibration Function}
  \author{Amy Vennos\footnote{corresponding author} , Xin Xing, and Christopher T. Franck\thanks{This material is based upon work supported, in whole or in part, by the U.S. Department of Defense (DoD) through the Office of the Under Secretary of Defense for Acquisition and Sustainment (OUSD(A\&S)) and the Office of the Under Secretary of Defense for Research and Engineering (OUSD(R\&E)) under Contract HQ003424D0023. The Acquisition Innovation Research Center (AIRC) is a multi-university partnership led and managed by the Stevens Institute of Technology through the Systems Engineering Research Center (SERC) – a federally funded University Affiliated Research Center. Any views, opinions, findings and conclusions or recommendations expressed in this material are those of the author(s) and do not necessarily reflect the views of the United States Government (including the DoD and any government personnel).} \\
    Department of Statistics, Virginia Polytechnic Institute and State University} 
  \maketitle
} \fi

\if0\anon
{
  \bigskip
  \bigskip
  \bigskip
  \begin{center}
    {\LARGE\bf Title}
\end{center}
  \medskip
} \fi

\bigskip
\begin{abstract}
Machine-generated probability predictions are essential in modern classification tasks such as image classification. A model is well calibrated when its predicted probabilities correspond to observed event frequencies. Despite the need for multicategory recalibration methods, existing  methods are limited to (i) comparing calibration between two or more models rather than directly assessing the calibration of a single model, (ii) requiring under-the-hood model access, e.g., accessing logit-scale predictions within the layers of a neural network, and (iii) providing output which is difficult for human analysts to understand. To overcome (i)-(iii), we propose Multicategory Linear Log Odds (MCLLO) recalibration, which (i) includes a likelihood ratio hypothesis test to assess calibration, (ii) does not require under-the-hood access to models and is thus applicable on a wide range of classification problems, and (iii) can be easily interpreted. We demonstrate the effectiveness of the MCLLO method through simulations and three real-world case studies involving image classification via convolutional neural network, obesity analysis via random forest, and ecology via regression modeling. We compare MCLLO to four comparator recalibration techniques utilizing both our hypothesis test and the existing calibration metric Expected Calibration Error to show that our method works well alone and in concert with other methods.
\end{abstract}

\noindent%
{\it Keywords:  }  Multiclass Classification, Calibration, Machine Learning,  Confidence Scores, Likelihood Ratio Test 
\vfill

\newpage
\spacingset{1.8} 

\section{Introduction}\label{sec-intro}

Multicategory probability predictions arise from statistical and machine classifiers and estimate the probability that objects belong to each of $c$ categories. For example, an image classifier may be used to predict which animals or vehicles are contained within a set of images. Multicategory predictions are used widely in image classification \citep{sambyal_2023_imageclassificationcalibration, Rajaraman_2022_calibration_images, Coz_imageclassification, Kuppers_2020_calibrationimageclassification} including for satellite images \citep{karalas2016remotesensing, wanremotesensing}, medical diagnoses \citep{HameedSkinCancerDetection, Mi2021CancerClassification}, geological hazard mapping \citep{GoetzGeologicalHazard},  speech recognition \citep{Ganapathiraju_2004_SVMspeechrecognition}, remote sensing \citep{maxwellremotesensing}, defense applications \citep{machado2023defense}, and sorting through gene expression datasets for cancer research \citep{Panca2017multiclasscancerresearch1, Yeung2003cancerresearch2, Ramaswamy2001multiclasscancerresearch3}.  

The value of probability predictions to decision makers depends on \textit{calibration}, which describes how well empirical observations agree with the claimed probability predictions. If a model is well calibrated, then the probability predictions in all categories agree with the observed frequencies of observations in each category. For example, suppose a diagnostic model predicts a 40\% chance that a patient has cancer. Clinical risk is only properly conveyed by the model if 4 in 10 patients who are claimed to have a 40\% chance of cancer actually have cancer. More generally, we say that a model’s predictions are well calibrated if this property holds for all predictions between 0\% and 100\%. Perhaps surprisingly, there are no guarantees that probability predictions from statistical and machine learning models are well calibrated, especially when predictions are made outside of the data set used to train the model. We want to know whether predictions are well calibrated, and if they aren't calibrated, we need methods to recalibrate them. Recalibration methods adjust probability predictions so they are calibrated. 

Figure \ref{fig:aretheseplanes} provides an example of calibration in the context of multicategory image classification. This figure shows two images (indices 5037 and 4526) of planes from the CIFAR-10 data set \citep{Krizhevsky2009cifar}. The image on the left is more clearly a plane than the image on the right.  A neural net classified both images as planes and outputted confidence scores that reflect the neural net's certainty in each of the ten possible labels. Note that 0-1 probability scale predictions outputted by machine learning algorithms are sometimes called \textit{confidence scores}. We include the top five outputted confidence scores from the Visual Geometry Group  neural net (VGGNet) \citep{simonyan_2014_vggnet} followed by recalibrated versions of those predictions furnished by out Multicategory Linear Log Odds (MCLLO) method, which is described in Section \ref{sec:Methods}. In this example, the image on the left received a confidence score of 1.00 for plane and a confidence score of 0.00 for all other categories. This image is obviously a plane. If we assess the empirical rate at which objects with the same confidence scores are actually planes, it is within rounding error of 100\%, thus the recalibrated probability is also 1.00. By contrast, for the more ambiguous image on the right, the neural net assigns a confidence score of 0.94. However, on the basis of available training data, scores of 94\% as in this case only truly contain planes in 80\% of cases. The distinction between a 94\% probability of an event versus an 80\% actual probability of the event would be crucial in many decision making tasks. Our MCLLO method can be used to detect the uncalibrated nature of these predictions and adjust them so that e.g., a confidence score of 94\% in this example actually corresponds to an empirical rate of actually being a plane of 80\%. 

\begin{figure}

\centering{

\includegraphics[width=4.1in,height=\textheight]{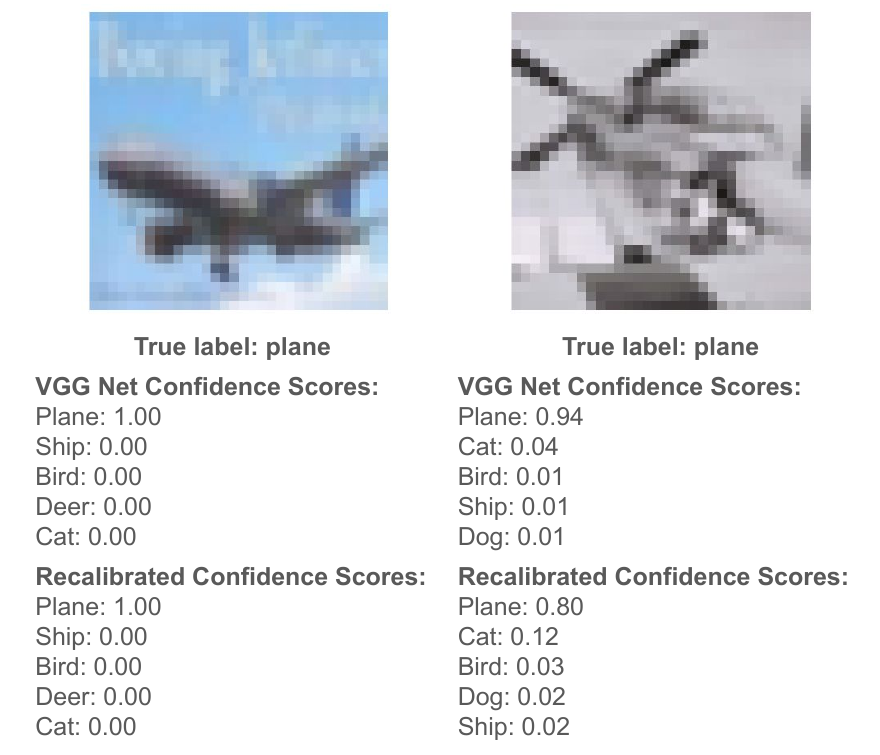}

}

\caption{\label{fig:aretheseplanes}Two images of observed label ``plane" from the CIFAR-10 data set. A neural net outputted confidence scores of a set of ten labels for each image and reported high confidence scores for the label of ``plane" for both images. However, the image on the right is less recognizable as a ``plane" than the image on the left. Direct recalibration via MCLLO as described in Section (\ref{sec:Methods}) adjusts the probability of each label to correspond better with the rates that planes occur given data such as these images.}

\end{figure}

Much work has been done in ensuring machine learning models are producing well-calibrated predictions  \citep{Niculescu_machinelearningcalibration, guo2017calibration, kumar_machinelearningcalibration}. In the binary setting, much research has been published in improving calibration, including  empirical binning \citep{zadroznyelkan2001binning, Pakdaman_Naeini2015binarybinning}, isotonic regression \citep{Barlow1972isotonicregression, Naeini2016isotonicregression}, Platt scaling \citep{platt1999PlattScaling}, and Beta calibration \citep{kull2017betacalibration}. In our literature review of recalibration methods, we were surprised to learn that many efforts to improve calibration in binary cases do not appear to have a straightforward extension to the multicategory setting: for example, the multicategory setting introduces varieties in the definition of calibration. The strongest of these varieties is \textit{multiclass} calibration, which considers all probability predictions for every category and every observation, and is preferred over other methods of calibration \citep{springerpaper}.  

Two popular methods of multiclass recalibration are temperature scaling and vector scaling \citep{hinton2015temperaturescaling, guo2017calibration}, which are extensions of Platt scaling that linearly scale the  \textit{logits}, or scores outputted from the last layer of the neural net before the softmax function, instead of probability predictions.  Since the softmax function is not invertible, the logits in the layer before the softmax function cannot be recovered. Consequently, temperature scaling and vector scaling are limited in their scope in that the methods can only be applied when the logits are available. This may involve under-the-hood access to, say, a convolutional neural network model in order to access and recalibrate the model logits in the appropriate layer of the network. We show an example of this in Section \ref{section:data:cifar}. Some methods, such as random forests, produce confidence scores by vote counting trees and do not have natural logits with which to employ temperature and vector scaling. These scaling methods are thus unusable for probability scale predictions. We show such an example in Section \ref{section:data:obesity}. Additionally, certain shifts of the logits that produce the same probability predictions are recalibrated differently by these methods. Our MCLLO method overcomes this issue by operating directly on probability scale predictions, which are available in virtually all modern classification techniques. Unlike temperature and vector scaling, MCLLO methods do not require under-the-hood access to models in order to implement recalibration. 


Another shortcoming among existing methods is that there are no recalibration techniques that include hypothesis testing for calibration status. This brings about a limitation in that many recalibration techniques rely on outside measures of calibration, such as the Hosmer Lemeshow (HL) test (which can test for calibration but does nor prescribe a recalibration approach) \citep{Fagerland2008classwisehypothesistesting}, a visualization of calibration called reliability diagrams \citep{Vaicenavicius2019multiclasshypothesistest}, \cite{widmann2019multiclasshypothesistest}'s matrix-valued kernel-based quantification of calibration, and the popular Expected Calibration Error (ECE) \citep{guo2017calibration, Naeini_2015_bayesianbinning} to determine calibration status. These measures of calibration are limited in that they cannot simultaneously determine with a prescribed significance level if a model is well-calibrated and also prescribe a recalibration approach if necessary. Methods such as the ECE score are useful for comparing competing models (lower score is better) but offer less insight as to whether a single model is performing adequately. Many practitioners may appreciate the hypothesis testing framework that can make calibration status more clear on the basis of a single model's predictions, thus informing the need to recalibrate or not.

Additionally, binning-based metrics of calibration like ECE and Maximum Calibration Error \citep{guo2017calibration} are particularly limited by their dependence on a user-specified number of bins, as a different number of bins can impact the resulting calibration metric greatly. For example, in some applications, ECE and MCE can produce either the same metric or vastly different results depending on the chosen bin size for the same data set \citep{Reinke_MCC_ECE_comparison}. See the discussion in Section \ref{sec:discussion} for commentary on how the number of bins affects ECE results. Heuristics for the number of bins in calculating ECE vary, ranging from a recommendation of 10 to 20 bins \citep{nixon2019ecebins}, to using the square root of the number of observations \citep{posocco_ECE_squareroot}. By contrast, our MCLLO method works directly on the probability scale with no need for binning. We will adopt the square root rule for this paper when comparing binning-based methods such as ECE with the MCLLO method.

A recalibration technique with built-in hypothesis testing is advantageous to decision-makers in several ways. First, a recalibrator with built-in hypothesis testing guarantees inclusion of the identity map, a property noted by  \cite{Vaicenavicius2019multiclasshypothesistest} that describes recalibration techniques that identify well-calibrated probability predictions. Recalibrators that do not include the identity function will attempt to recalibrate well-calibrated data. For example, \cite{xudavoine} implement a multiclass calibration approach that utilizes Maximum Likelihood Estimates (MLEs), but their approach does not include the identity map and thus would at least slightly perturb even well calibrated probabilities.  Second, even though some existing recalibrators without the capability of hypothesis testing include the identity map, including a built-in hypothesis testing based metric of calibration controls the type one error rate and gains power as sample size or departure from calibration grows and is not dependent on a user-specified number of bins, which are properties that existing measures of calibration lack. Finally, a recalibration method that does not depend on a user-specified bin size and that applies to all confidence outputs by machine learning models and data models allows decision-makers to better trust a wider scope of probability predictions. 

Through our simulation study and case studies, we demonstrate that MCLLO recalibration can (1) identify well-calibrated data, (2) apply to a wider scope of machine-generated predictive outputs than temperature scaling or vector scaling, and (3) improve calibration of uncalibrated data. The effectiveness of recalibration is shown not only using our LRT, but also via the existing measures of calibration ECE and reliability diagrams. 

Furthermore, we compare MCLLO recalibration to four existing recalibration techniques. We implement an extension of binning described by \cite{guo2017calibration}, and include temperature and vector scaling as described by this paper, which otherwise compares existing methods of recalibration instead of proposing a new method of recalibration. We also implement a multiclass calibration technique developed by \cite{xudavoine}. We implement and assess these techniques in three case studies, showing virtues and drawbacks of all methods under comparison.

This paper is organized as follows. Section \ref{sec:Methods} introduces MCLLO Recalibration and its properties. Section \ref{section:Simulation} demonstrates the capabilities MCLLO recalibration and LRT through a simulation study.   Section \ref{section:casestudies} applies MCLLO recalibration to image classification, obesity, and ecology datasets and compares its functionality to four comparator recalibration techniques. Section \ref{sec:discussion} provides a discussion of our proposed recalibration technique and results.

\section{Multicategory Linear Log Odds Methodology} \label{sec:Methods}

We address the multicategory classification problem with $c>1$ classes. Each observation $\mathbf{Y}_i$ belongs in one and only one class. Thus we consider the random vector $\mathbf{Y}_{i \cdot} \sim MN ( \mathbf{p}_{i, c \times 1} \in \mathbb{R}^c, \text{size} = 1)$, where $\mathbf{p}_{i, c \times 1}$ is a collection of predictions that the observation will assume the value of each category. The goal of calibration assessment and recalibration is to ensure that probability predictions $\mathbf{{x_i}}$ correspond to the actual event rates $\mathbf{p}_i$. If necessary the   $\mathbf{{x_i}}$ can be adjusted, i.e., recalibrated to resemble the $\mathbf{p}_i$ so that the recalibrated predictions ultimatly describe the actual events  $\mathbf{Y}_i$.

This paper considers the possibilities $\mathbf{p}_{c \times 1, i} = \mathbf{x}_{i \cdot}$, the original probability predictions, versus $\mathbf{p}_{c \times 1, i} = \mathbf{g}_{i \cdot}$, recalibrated predictions. The original and recalibrated predictions are stored in matrices $\mathbf{X}$ and $\mathbf{G}$, respectively.  In this section, we consider the possibility that the recalibrated set of predictions $\mathbf{G}$ describing observations $\mathbf{Y}$ differs from the original predictions $\mathbf{X}$ according to a linear function on the log odds scale.

In our multicategory framework, predictor's odds are defined between each category and the \textit{baseline} category. We follow the convention of setting the baseline category as the last category  $c$ \citep{agresti2006multinomiallogisticregression}.  In the MCLLO model, the analysis is not technically invariant to the choice of baseline category, although empirically we show that the baseline choice hsa a small effect on the analysis. See supplementary material for more details. 

We propose a transformation that maps each prediction $x_{ij}$ to a recalibrated prediction $g_{ij}$ for each observation $i \in \{ 1 , \ldots , n\}$, and category $j \in \{ 1 , \ldots , c\}$. This is done via a  transformation of the log odds of each event probability prediction $x_{ij}$ versus the probability prediction corresponding the baseline category $x_{ic}$, denoted $\log \left( \frac{x_{ij}}{x_{ic}} \right)$. The log odds of event probability predictions are mapped to the log odds of the  recalibrated predictions $g_{ij}$ and $g_{ic}$, denoted $\log \left( \frac{g_{ij}}{g_{ic}} \right)$. The mapping is a linear function involving shift and scale parameters.  Define two length $c-1$ vectors $\bm{\delta}$ and $ \bm{\gamma}$ that contain the parameter values for the recalibration shift and scale, respectively. More details about the interpretation of $\bm{\delta}$ and $\bm{\gamma}$ can be found in the supplementary material. The goal of recalibration is to find optimal parameter estimates $\hat{\bm{\delta}}$ and $\hat{\bm{\gamma}}$ so that the recalibrated predictions better reflect the observed frequencies in $\mathbf{Y}$. We obtain $\hat{\bm{\delta}}$ and $\hat{\bm{\gamma}}$ by maximizing a likelihood that associates probability predictions $\mathbf{{x_i}}$ with event data $\mathbf{Y}_i$. Mathematically, this shift and scale follows the system of baseline-category log odds formulas defined by

\begin{equation}
    \label{logodds_linear_model}
    \log \left( \frac{g_{ij}}{g_{ic}} \right) = \log (\delta_j) + \gamma_j log \left( \frac{x_{ij}}{x_{ic}} \right),
\end{equation}
for $i\in \{ 1, \ldots , n \}$ and $j \in \{ 1, \ldots , c-1 \} $. The recalibrated probability prediction for the baseline category follows from the axioms of probability, shown in Equation (\ref{eqn:logoddsmodelaxiom}).

\begin{equation} \label{eqn:logoddsmodelaxiom}
    \sum_{j=1}^c g_{ij} = 1.
\end{equation}

\noindent Our MCLLO approach involves $c$ categories and therefore utilizes a system of $c$ equations with $2 (c-1)$ parameters.

 Equation (\ref{logodds_linear_model}) is called  the Multicategory Linear Log Odds (MCLLO) recalibration formula. Manipulating this formula with the constraint provided by Equation (\ref{eqn:logoddsmodelaxiom}) reveals a recalibrator for multicategory predictions. Another advantage of MCLLO recalibration is the availability of hypothesis testing, as parameter values of $\bm{\delta} = \bm{\gamma} = \mathbf{1}$ represent well calibrated predictions that do not require recalibration. MCLLO recalibration is therefore valuable to any multicategory classification application, as a researcher can (1) test for well-calibrated probability predictions via the methodology in Section \ref{section:LRT}, and if the predictions are not well-calibrated, (2) produce optimally calibrated probability predictions via equations provided in Section \ref{section:recalibration}.

\subsection{Analytic Form of Recalibrated Probability Predictions} \label{section:recalibration}

We algebraically re-express Equations (\ref{logodds_linear_model}) and (\ref{eqn:logoddsmodelaxiom}) to obtain Equation (\ref{g_equation1}), which provides the formula that produces recalibrated probability predictions $g_{ij}$ as a function of uncalibrated predictions and parameter vectors  $\bm{\gamma}$ and $\bm{\delta}$. The recalibrated probability predictions take the form

\begin{equation}
    g_{ij} = \frac{\delta_j x_{ij}^{\gamma_j}}{x_{ic}^{\gamma_j} + \sum_{m=1}^{c-1} \delta_m x_{im}^{\gamma_m} x_{ic}^{\gamma_j - \gamma_m} } , \label{g_equation1}
\end{equation}

\noindent for any observation $i \in \{ 1, \ldots, n \}$ and category  $j \in \{1 , \ldots c-1 \}$. The recalibrated probability prediction for the baseline category in observation $i$ is
\begin{equation}
    g_{ic} = 1 - \sum_{j=1}^{c-1} g_{ij}. \label{g_equation2}
\end{equation}

 Given any parameter values $\bm{\delta}, \bm{\gamma}$ and original probability predictions $\mathbf{X}$, the recalibrated predictions in $\mathbf{G}$ are directly calculated from Equations (\ref{g_equation1}) and (\ref{g_equation2}). A derivation is provided in the supplementary material.  Section \ref{sec:recalibration} describes a maximum likelihood-based approach to recalibrate a set of probability predictions $x_{ij}$ to $g_{ij}$.

\subsection{MCLLO Likelihood} 

Recall the random $n \times c$ matrix of observations   $\mathbf{Y}$ and  the random $n \times c$ matrix $\mathbf{X}$ of uncalibrated probability predictions on $c$ categories. The Multicategory Linear Log Odds Likelihood function is defined by

\begin{equation} \label{likelihood}
    \mathscr{L} ( \bm{\delta}, \bm{\gamma} ; \mathbf{X}, \mathbf{Y}) = \prod_{i=1}^n \prod_{j=1}^c  g_{ij}^{  Y_{ij}} ,
\end{equation}

\noindent where $g_{ij}$ is defined by Equations (\ref{g_equation1}) and (\ref{g_equation2}). This is a multinomial likelihood with recalibrated probabilities $g_{ij}$ as event probabilities.

\subsection{Recalibration of Probability Predictions} \label{sec:recalibration}

Equation (\ref{likelihood}) allows us to obtain maximum likelihood estimates (MLEs) $\Hat{\bm{\delta}}_{MLE}$ and $\Hat{\bm{\gamma}}_{MLE}$. We have found the Broyden\--Fletcher\--Goldfarb\--Shannon (BFGS) algorithm \citep{nocedal_bfgs} in \texttt{R} to be effective in calculating $\Hat{\bm{\delta}}_{MLE}$ and $\Hat{\bm{\gamma}}_{MLE}$. See the supplementary material for our reproducible code. Employing $\Hat{\bm{\delta}}_{MLE}$ and $\Hat{\bm{\gamma}}_{MLE}$ in Equations (\ref{g_equation1}) and (\ref{g_equation2}) allows a researcher to recalibrate probability predictions in the MCLLO framework.

\subsection{Likelihood Ratio Test} \label{section:LRT}

Our MCLLO approach includes a likelihood ratio test (LRT) to assess the plausibility that the probability predictions $x_{ij}$  are well calibrated. The hypotheses are
\begin{align*}
    H_0 &: \bm{\delta} = \bm{\gamma} = \mathbf{1}  \textnormal{ (probability predictions are well-calibrated)}, \text{versus} \\
    H_1 &: \delta_j \neq 1 \textnormal{ and/or } \gamma_j \neq 1 \textnormal{ for some } 1 \leq j \leq c-1.
\end{align*}

 The LRT statistic for $H_0$ is 

\begin{equation}
    \lambda_{LR} = - 2 \log \left( \frac{\mathscr{L} ( \bm{\delta} = \bm{\gamma} = \mathbf{1} | \mathbf{X}, \mathbf{Y} )  }{ \mathscr{L} (  \bm{\delta} = \Hat{\bm{\delta}}_{MLE} , \bm{\gamma} = \Hat{\bm{\gamma}}_{MLE} | \mathbf{X}, \mathbf{Y} ) } \right). \label{eqn:LRTteststat}
\end{equation}

\noindent The test statistic asymptotically follows a $\chi^2$ distribution with $2(c-1)$ degrees of freedom. 

\subsection{Theoretical Properties}

In this section we provide justification for the asymptotic consistency of the MLEs $\hat{\bm{\delta}}$ and $\hat{\bm{\gamma}}$. Lemma 1  asserts the convexity of the negative log of the MCLLO likelihood shown in Equation (\ref{likelihood}). This property of the likelihood equation is important as it guarantees that the Hessian matrix of the negative log likelihood equation with respect to the parameters is positive semi-definite, and therefore has non-negative eigenvalues. 

\begin{lemma}
     The negative log of the MCLLO likelihood in Equation (5) is convex in $\bm{\tau} = \log \bm{\delta}$ and $\bm{\gamma}$. \label{lemma}
\end{lemma}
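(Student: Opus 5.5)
Our plan is to recognize the MCLLO model as a multinomial logistic regression whose linear predictors are affine in the parameters, and then use the standard convexity of the log-sum-exp function.

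First we rewrite the recalibrated probabilities in softmax form. Set $\tau_j = \log \delta_j$ and $z_{ij} = \log(x_{ij}/x_{ic})$ for $j=1,\ldots,c-1$, with $z_{ic}=0$. Define the linear predictor $\eta_{ij} = \tau_j + \gamma_j z_{ij}$ for $j<c$ and $\eta_{ic}=0$. Equation (\ref{logodds_linear_model}) says $\log(g_{ij}/g_{ic}) = \eta_{ij}$. Combined with Equation (\ref{eqn:logoddsmodelaxiom}), this gives
\[
g_{ij} = \frac{\exp(\eta_{ij})}{1+\sum_{m=1}^{c-1}\exp(\eta_{im})}, \qquad j=1,\ldots,c .
\]
Dividing the numerator and denominator of Equation (\ref{g_equation1}) by $x_{ic}^{\gamma_j}$ confirms that this agrees with the paper's analytic form.

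Next, since each $\mathbf{Y}_{i\cdot}$ is a one-hot vector with $\sum_j Y_{ij}=1$, the negative log-likelihood from Equation (\ref{likelihood}) becomes
\[
-\log\mathscr{L} = \sum_{i=1}^n \Bigl[ \log\Bigl(1+\sum_{m=1}^{c-1} e^{\eta_{im}}\Bigr) - \sum_{j=1}^{c-1} Y_{ij}\,\eta_{ij} \Bigr].
\]
Each $\eta_{ij}$ is an affine (indeed linear) function of $\bm{\theta}=(\bm{\tau},\bm{\gamma})\in\mathbb{R}^{2(c-1)}$, because the $z_{ij}$ are fixed data. The second term is therefore linear in $\bm{\theta}$, and linear functions are convex. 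The first term is $\mathrm{LSE}(0,\eta_{i1},\ldots,\eta_{i,c-1})$ composed with an affine map. A convex function composed with an affine map is convex, and a finite sum of convex functions is convex, so $-\log\mathscr{L}$ is convex in $\bm{\theta}$.

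The only nontrivial ingredient is the convexity of the log-sum-exp function $f(\mathbf{u})=\log\sum_k e^{u_k}$. We verify it through its Hessian, $\nabla^2 f = \mathrm{diag}(\mathbf{q}) - \mathbf{q}\mathbf{q}^\top$ with $\mathbf{q}$ the softmax of $\mathbf{u}$. For any vector $\mathbf{v}$,
\[
\mathbf{v}^\top(\mathrm{diag}(\mathbf{q})-\mathbf{q}\mathbf{q}^\top)\mathbf{v} = \sum_k q_k v_k^2 - \Bigl(\sum_k q_k v_k\Bigr)^2 \ge 0,
\]
which is the variance of $v$ under the distribution $\mathbf{q}$ (equivalently, Cauchy--Schwarz or Jensen).

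Chaining the pieces gives an explicit Hessian of $-\log\mathscr{L}$:
\[
\nabla^2_{\bm{\theta}}\bigl(-\log\mathscr{L}\bigr) = \sum_i A_i^\top\bigl(\mathrm{diag}(\mathbf{g}_i)-\mathbf{g}_i\mathbf{g}_i^\top\bigr)A_i,
\]
where $\mathbf{g}_i=(g_{i1},\ldots,g_{i,c-1})^\top$ and $A_i$ is the $(c-1)\times 2(c-1)$ Jacobian with rows $\partial\eta_{ij}/\partial\bm{\theta} = (\mathbf{e}_j^\top, z_{ij}\mathbf{e}_j^\top)$. This matrix is positive semi-definite, which is the property the paper uses afterwards. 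For the reduced $(c-1)$-vector, the bracketed matrix equals $\sum_j g_{ij}v_j^2-(\sum_j g_{ij}v_j)^2$ with total weight $\sum_{j<c} g_{ij}\le 1$; this is still nonnegative because it is the variance with $v_c=0$ padded in.

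The main point of care is one caveat. Convexity holds in $\bm{\tau}=\log\bm{\delta}$ but not in $\bm{\delta}$ itself, which is why the lemma uses this reparametrization. The argument also needs all $x_{ij}>0$ so that the $z_{ij}$ are finite; predictions of exactly $0$ must be clipped or otherwise handled.
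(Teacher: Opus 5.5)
Your proof is correct and follows the same idea as the paper's: write each $g_{ij}$ as a softmax of activations $\eta_{ij}=\tau_j+\gamma_j\log(x_{ij}/x_{ic})$ that are linear in $(\bm{\tau},\bm{\gamma})$, so the negative log-likelihood is a softmax cross-entropy and hence convex. The paper gets that convexity by citing Corollary 3 of \cite{rychlik_2021_psdHessian}, whereas you verify it directly through the log-sum-exp Hessian; your zero baseline activation $\eta_{ic}=0$ is also the cleaner reduction, since the paper's softmax over only the $c-1$ activations yields $g_{ij}/(1-g_{ic})$ rather than the $g_{ij}$ that appear in the likelihood.
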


\begin{proof}
See Appendix.
\end{proof}

Lemma 1 leads to the following theorem of the asymptotic consistency of the MLEs $\hat{\bm{\delta}}$ and $\hat{\bm{\gamma}}$. This characteristic of $\hat{\bm{\delta}}$ and $\hat{\bm{\gamma}}$ is especially important as the MLEs are estimates and thus it is necessary to show that $\hat{\bm{\delta}}$ and $\hat{\bm{\gamma}}$  converge to the true parameter values  ${\bm{\delta}}$ and ${\bm{\gamma}}$ as sample size $n$ becomes large.

\begin{theorem}
    The maximum likelihood estimators $\hat{\bm{\theta}}_n = ( \hat{\bm{\delta}_n}, \hat{\bm{\gamma}_n} )$  for the log of the MCLLO likelihood shown in Equation (5), $\ell ( \bm{\theta})$, are asymptotically consistent. In other words, 
    \begin{align*}
\hat{\bm{\theta}}_n &\xrightarrow{p} \bm{\theta},
\end{align*}
where $\hat{\bm{\theta}}_n = (\hat{\bm\delta}_n, \hat{\bm\gamma}_n)$ is the MLE based on a sample of size $n$, and $\bm{\theta}_0 = (\bm\delta_0,\bm \gamma_0)$ is the true parameter value. \label{theorem}
\end{theorem}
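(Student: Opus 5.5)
We prove consistency in the reparametrization $\bm{\eta} = (\bm{\tau}, \bm{\gamma})$ with $\bm{\tau} = \log \bm{\delta}$, and then transfer the result to $\bm{\theta} = (\bm{\delta}, \bm{\gamma})$. The map $\bm{\tau} \mapsto \exp(\bm{\tau})$ is continuous, so the continuous mapping theorem does the transfer; MLEs are invariant under reparametrization. We assume the pairs $(\mathbf{x}_i, \mathbf{Y}_i)$ are i.i.d., with $\mathbf{Y}_i \mid \mathbf{x}_i \sim MN(\mathbf{g}_i(\bm{\eta}_0), 1)$ and the $x_{ij}$ bounded away from $0$, so that all log odds are finite.

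\textbf{Step 1: the model as a multinomial logistic regression.} Equation (\ref{logodds_linear_model}) shows that the model is linear in $\bm{\eta}$ on the baseline-category logit scale. For each $j \le c-1$, $\log(g_{ij}/g_{ic}) = \tau_j + \gamma_j z_{ij}$ with $z_{ij} = \log(x_{ij}/x_{ic})$. Hence $-\ell_n(\bm{\eta})/n = \frac{1}{n}\sum_i \big[\log(1+\sum_{m<c} e^{\tau_m + \gamma_m z_{im}}) - \sum_{j<c} Y_{ij}(\tau_j + \gamma_j z_{ij})\big]$, which is an average of i.i.d.\ convex functions of $\bm{\eta}$. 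By Lemma \ref{lemma} the criterion $M_n(\bm{\eta}) = -\ell_n(\bm{\eta})/n$ is convex.

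\textbf{Step 2: identify the population limit and show it is uniquely minimized at $\bm{\eta}_0$.} Let $M(\bm{\eta}) = E[m(\bm{\eta}; \mathbf{x}, \mathbf{Y})]$. The moment bounds on $z$ make this finite, and the pointwise weak law of large numbers gives $M_n(\bm{\eta}) \to M(\bm{\eta})$ in probability for each $\bm{\eta}$. Write $M(\bm{\eta}) - M(\bm{\eta}_0) = E[\mathrm{KL}(\mathbf{g}(\bm{\eta}_0) \,\|\, \mathbf{g}(\bm{\eta}))] \ge 0$. Equality holds iff $\mathbf{g}(\bm{\eta}) = \mathbf{g}(\bm{\eta}_0)$ almost surely, that is, iff $\tau_j + \gamma_j z_j = \tau_{0j} + \gamma_{0j} z_j$ a.s.\ for every $j$. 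This forces $\bm{\eta} = \bm{\eta}_0$ provided each $z_j$ is nondegenerate, i.e.\ $\mathrm{Var}(z_j) > 0$. We therefore add that nondegeneracy as an identifiability assumption. It is equivalent to positive definiteness of the Fisher information $E[\nabla^2 m]$, which makes $M$ strictly convex near $\bm{\eta}_0$.

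\textbf{Step 3: convexity upgrades pointwise convergence to consistency.} This is the key step and the main technical point. Since $\bm{\eta}$ ranges over all of $\mathbb{R}^{2(c-1)}$, there is no compact parameter space, so the usual Wald argument needs modification. Here we invoke the convexity lemma (Andersen--Gill / Pollard; see also Newey--McFadden Thm.\ 2.7). For convex random functions, pointwise convergence in probability to a convex limit implies uniform convergence on compact sets. Fix $\epsilon > 0$ and the sphere $S_\epsilon = \{\|\bm{\eta} - \bm{\eta}_0\| = \epsilon\}$. By Step 2, $\inf_{S_\epsilon} M - M(\bm{\eta}_0) = \kappa > 0$, and uniform convergence on the ball gives, with probability tending to one, $\inf_{S_\epsilon} M_n > M_n(\bm{\eta}_0)$. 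Convexity of $M_n$ then implies that any minimizer lies inside the ball. Indeed, if $\hat{\bm{\eta}}$ were outside, the segment from $\bm{\eta}_0$ to $\hat{\bm{\eta}}$ crosses $S_\epsilon$ at a point where, by convexity, $M_n \le \max(M_n(\bm{\eta}_0), M_n(\hat{\bm{\eta}})) = M_n(\bm{\eta}_0)$, a contradiction. The same argument gives existence of the MLE with probability tending to one, which settles the existence of the maximizer. Hence $P(\|\hat{\bm{\eta}}_n - \bm{\eta}_0\| < \epsilon) \to 1$, i.e.\ $\hat{\bm{\eta}}_n \xrightarrow{p} \bm{\eta}_0$.

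Finally, we apply $\bm{\delta} = \exp(\bm{\tau})$ componentwise to conclude $\hat{\bm{\theta}}_n \xrightarrow{p} \bm{\theta}_0$. The obstacle to watch is exactly the identifiability in Step 2. If some $z_j$ is a.s.\ constant, for instance predictions that never vary, then $\tau_j$ and $\gamma_j$ are confounded and the theorem fails. The proof must therefore state this as a hypothesis, together with $E|z_j| < \infty$ for the law of large numbers.
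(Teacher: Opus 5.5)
Your proof is correct, and it takes a different route from the paper. The paper works through the score. It writes the first-order condition $-\partial\ell(\hat{\bm\theta}_n)/\partial\bm\theta=0$ and Taylor-expands around $\bm\theta_0$. It then replaces $\frac{1}{n}$ times the Hessian at an intermediate point $\bm\theta^\ast$ by $-\mathcal{I}(\bm\theta_0)$ via the LLN, applies a CLT to the score and Slutsky to get $\sqrt{n}$-asymptotic normality, and reads off consistency. Lemma~1 is used only to say the Hessian is positive semidefinite, with a generalized inverse if it is singular. You instead treat MCLLO as a baseline-category logit that is linear in $(\bm\tau,\bm\gamma)$ and run the convex M-estimation argument: a pointwise LLN, then a KL identity giving a unique population minimizer when $\mathrm{Var}(z_j)>0$, then the Andersen--Gill/Pollard convexity lemma (Newey--McFadden Thm.~2.7) to get uniform convergence on compacts and localize the minimizer.

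What your route buys is that it does not presuppose what it proves. Replacing the Hessian at $\bm\theta^\ast$ by $-\mathcal{I}(\bm\theta_0)$ already needs $\bm\theta^\ast\xrightarrow{p}\bm\theta_0$, which is consistency itself. Positive semidefiniteness alone also does not license the inversion, since a generalized inverse does not recover $\hat{\bm\theta}_n-\bm\theta_0$. What is really needed is positive definiteness of $\mathcal{I}(\bm\theta_0)$, which is exactly your identifiability hypothesis. Your argument also yields existence of the MLE with probability tending to one without a compact parameter space. And you use Lemma~1 in the parametrization where it holds: convexity is in $\bm\tau=\log\bm\delta$, whereas in $\bm\delta$ the sample Hessian need not be positive semidefinite, which is a mismatch in the paper's argument.

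What the paper's route offers, once consistency is supplied (for instance by your argument), is asymptotic normality with covariance $\mathcal{I}^{-1}(\bm\theta_0)$. That is what underlies the reported standard errors and the $\chi^2$ reference distribution for the LRT; your argument stops at consistency. One small point: your remark that $\mathrm{Var}(z_j)>0$ is equivalent to $\mathcal{I}(\bm\eta_0)\succ 0$ needs $E z_j^2<\infty$, not just $E|z_j|<\infty$. Your assumption that the $x_{ij}$ are bounded away from zero covers this.
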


\begin{proof}
 See Appendix.
\end{proof}

\section{Simulation Study} \label{section:Simulation}

\subsection{Simulation Study Results: LRT} \label{section:simulationresults}

  We demonstrate how statistical power of our LRT is impacted by sample size $n$ and effect size (i.e. $\bm{\delta}$ and $\bm{\gamma}$'s distance from $\mathbf{1}$). This is implemented through a Monte Carlo simulation study with $c = 10$ categories. We work with thirty sets of simulated data, with sample sizes $n$ from the set $\{ 400, 450, 500, 600, 1000, 5000\}$, with shifts from perfectly well-calibrated data except for the shift parameter $\delta_1$ taking values from the set $\{ 1, 1.1, 1.2, 1.3, 1.4 \}$. More information about the simulation process can be found in the supplementary material.

 We perform our LRT described in Section \ref{section:LRT} and calculate a $p$-value for each repetition using $\alpha = 0.05$. The rejection rate for each simulation is plotted in Figure \ref{fig:simulationrejectionrate}. The results show that our LRT becomes more powerful as sample size and effect size grow.

\begin{figure}

\centering{

\includegraphics[width=3.5in,height=\textheight]{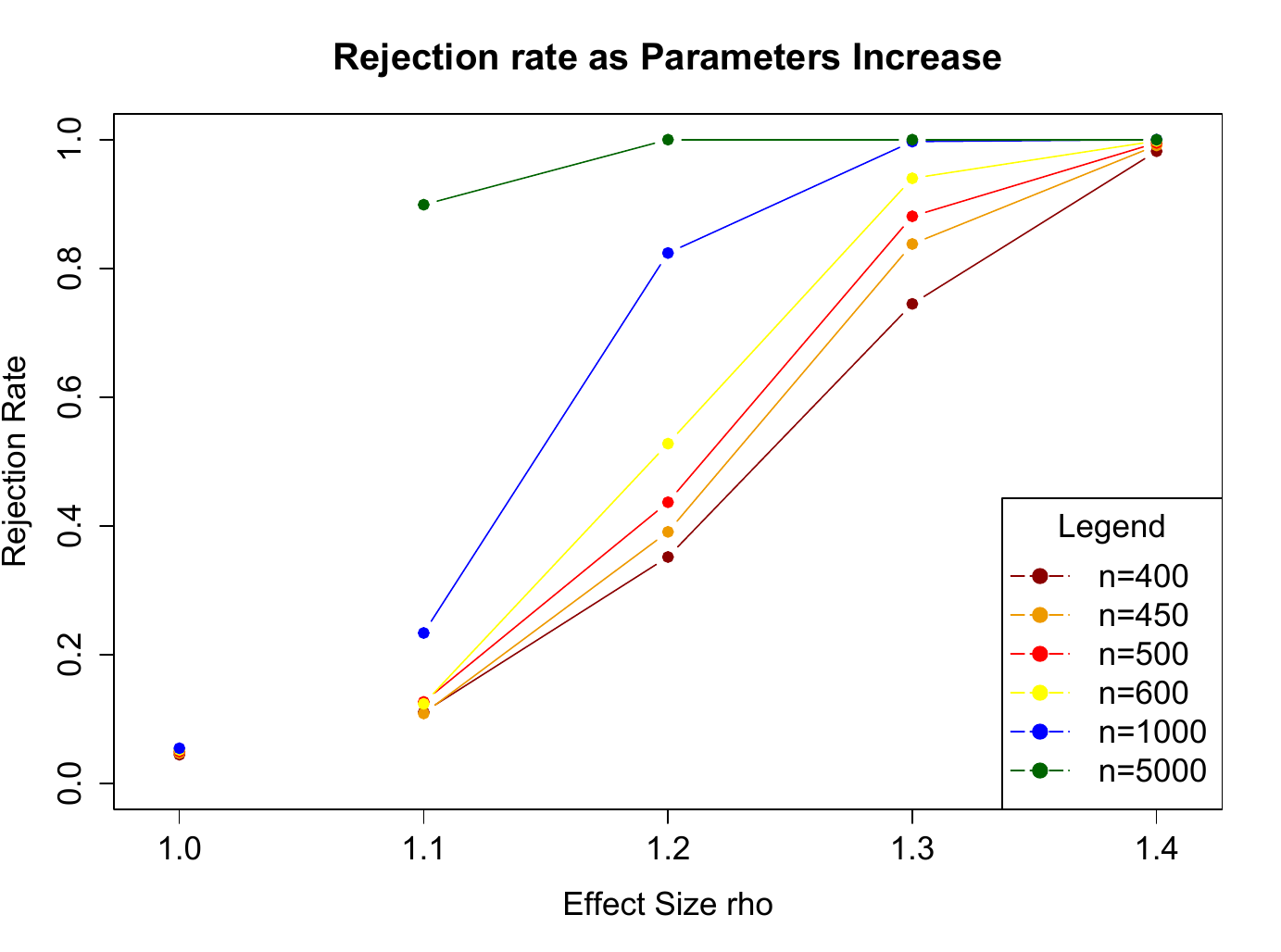}

}

\caption{\label{fig:simulationrejectionrate} Thirty simulations of 1000 Monte Carlo repetitions  were run with varying sample sizes and effect sizes. The rejection rates for each simulation are tabulated in the supplementary material and visualized in this figure, showing the effect of sample size and effect size on the power of our LRT. }

\end{figure}

\subsection{Simulation Study: ECE} \label{section:ECEsimulationresults}

To show how our recalibration strategy achieves calibration according to existing metrics (and not just our own LRT approach), we asses ECE \citep{guo2017calibration, Naeini_2015_bayesianbinning}, showing it improves as our MCLLO recalibration approach is applied.  We analyze the uncalibrated predictions $\mathbf{X}$ and MCLLO-recalibrated predictions $\mathbf{X}^*$. In each simulation, we calculate the ECE of $\mathbf{X}$ and $\mathbf{X}^*$, denoted $ECE(\mathbf{X})$ and $ECE(\mathbf{X}^*)$. Figure \ref{fig:ecegraph} display boxplots of $ECE(\mathbf{X})$ (in navy) and $ECE(\mathbf{X}^*)$ (in green) for the five simulations of sample size $n=5000$ with effect sizes $ \{ 1, 1.1, 1.2, 1.3, 1.4\}$. The boxplots show that though  the ECE of probability predictions $\mathbf{X}$ increases with greater effect size, the MCLLO recalibrated predictions $\mathbf{X}^*$ are more stable no matter how uncalibrated the original probability predictions $\mathbf{X}$ are. The supplementary material contains visualizations of the comparisons for the other simulations conducted with different sample sizes.

\begin{figure}
\centering{
\includegraphics[width=4.2in,height=\textheight]{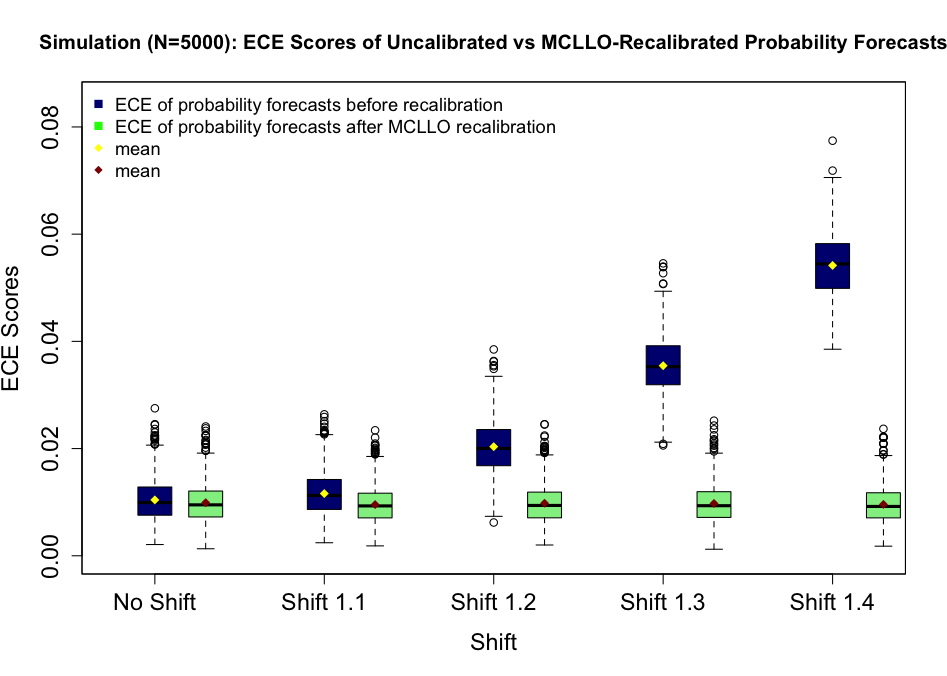}
}
\caption{\label{fig:eceboxplot} Five simulations of 1000 Monte Carlo Repetitions (with sample size $n=5000$) were run with varying effect sizes. The ECE of the original probability predictions $ECE(\mathbf{X})$ for each repetition are plotted in navy, and the ECE of the MCLLO recalibrated probability predictions $ECE(\mathbf{X}^*)$ for each repetition are plotted in green.}
\end{figure}

We also show ECE fluctuates as sample size and effect size vary, making articulation of meaningful ECE thresholds difficult. Figure \ref{fig:ecegraph} plots the mean difference in the ECE of the uncalibrated versus MCLLO-recalibrated probability predictions $ECE(\mathbf{X}) -ECE(\mathbf{X}^*)$ for the 1000 repetitions of twenty of the simulations. Figure \ref{fig:ecegraph} shows that MCLLO recalibration has little effect on the ECE of the uncalibrated probability predictions $\mathbf{X}$ when the data are not calibrated. As effect size grows, MCLLO recalibration increasingly lowers the ECE of the probability predictions. Unlike the LRT analysis in the previous subsection, the effectiveness of recalibration via ECE is not a function of sample size. 

\begin{figure}
\centering{
\includegraphics[width=4.5in,height=\textheight]{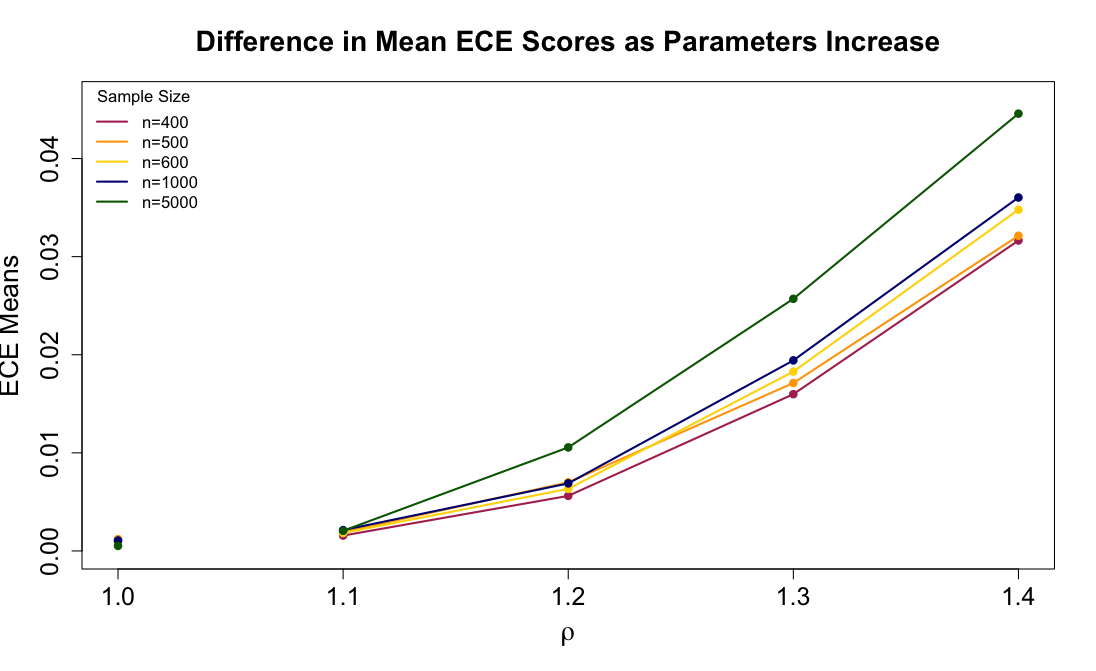}
}

\caption{\label{fig:ecegraph}Twenty simulations of 1000 Monte Carlo repetitions  were run with varying sample sizes and effect sizes. The mean difference in ECE scores $ECE(\mathbf{X}) - ECE(\mathbf{X}^*)$ for each simulation are visualized. }

\end{figure}

\section{Case Studies} \label{section:casestudies}

We demonstrate the capabilities of MCLLO recalibration through three case studies on image classification, obesity, plus an and ecology example in the supplementary material. In each example, we apply MCLLO recalibration to probability predictions and assess calibration using both ECE and the MCLLO LRT described in Section  \ref{section:LRT}. Note that in all three studies, we select the baseline category to be the last category alphabetically.  We compare MCLLO recalibration to four existing recalibration techniques in each setting.

The case studies showcase several properties of MCLLO recalibration that some recalibration techniques lack. First, they demonstrate the wide range of applications that MCLLO recalibration can be applied to, as it does not require under-the-hood access to the underlying model. Second, we show the interpretability of the MCLLO LRT relative to ECE. The LRT is designed specifically to assess whether a set of probability predictions is poorly calibrated, wheras a standalone ECE score is not. We also show that MCLLO recalibration does not depend on a user-specified bin size. Finally, we demonstrate that MCLLO recalibration produces probability predictions that are comparable to or more well-calibrated than those obtained from competing recalibration methods, as confirmed by both the MCLLO LRT and ECE.

\subsection{Comparator Methods}\label{ section:comparatormethods}

We compare MCLLO recalibration to four comparator methods. The first comparator method is a multiclass calibration technique developed by \cite{xudavoine} that utilizes a likelihood-based belief function to recalibrate a set of probability predictions. The second is a  binning procedure as described as a comparison for the methods proposed by \cite{guo2017calibration}. The final two comparator methods are temperature scaling \citep{hinton2015temperaturescaling} and vector scaling \citep{springerpaper}. Temperature scaling involves a single parameter for recalibration while vector scaling uses $2c$ parameters. Both scaling approaches require under-the-hood access to pre-softmax layer logits and thus cannot be used in the obesity case study which rely on random forests. More information on these methods can be found in the supplementary material.

\subsection{Case Study: Image Classification} \label{section:data:cifar}

We utilize the CIFAR-10 dataset \citep{Krizhevsky2009cifar}, which consists of $n=10,000$ images of objects that belong to one of of ten categories: \textit{airplane, automobile, bird, cat, deer, dog, frog, horse, ship,} or \textit{truck}. The observed categories for each observation  specified by CIFAR-10 are encoded in $\mathbf{Y}$. The set of probability predictions, $\mathbf{X}$, are outputted from VGG Net \citep{simonyan_2014_vggnet},a famous  image classification model. More details about our neural network implementation can be found in the supplementary material. The outputted probability predictions have $93.00 \%$ for the holdout set accuracy on basis of the most probable prediction. 

We randomly split the probability predictions $\mathbf{X}$ into an 80\% training set (with $n_t = 8,000$ observations) and a 20\% holdout set (with $n_h = 2,000$ observations). Denote the training set of probability predictions as $\mathbf{X}_t$ and holdout set of probability predictions as $\mathbf{X}_h$. The corresponding observed labels are denoted $\mathbf{Y}_t$ and $\mathbf{Y}_h$.

\subsubsection{Calibration Assessment for Image Classification} \label{section:cifarresults}

We compute MLEs and apply the MCLLO Likelihood Ratio Test described in Section \ref{section:LRT} to determine if the original, uncalibrated confidence scores $\mathbf{X}_{t} $ and $ \mathbf{X}_{h}$ and observed values encoded in $\mathbf{Y}_t$ and $\mathbf{Y}_h$ are well-calibrated.

These estimates are shown in Table \ref{table:cifar_parameters} for both the training [Top] and holdout [Bottom] sets of confidence scores. The standard errors for these estimates are also listed in parentheses. The standard errors were obtained by inverting the Hessian matrix outputted by the \texttt{optim()} function, applied to the negative of the log likelihood function  described by Equation (\ref{likelihood}). The square root of the diagonal entries of the inverse Hessian matrix are the standard errors of the $2(c-1)$ parameter estimates. 

To quantify the calibration of both sets, we conduct the MCLLO LRT to test the calibration status of $\mathbf{X}_t$, $\mathbf{Y}_t$ and  $\mathbf{X}_h$, $\mathbf{Y}_h$. The test statistic is calculated to be $\lambda_{LR,t} = 533$ on 18 degrees of freedom for the training set and $\lambda_{LR,h} = 123$ for the holdout set, yielding $p$-values of $<0.0001$ for both sets of uncalibrated confidence scores. We conclude that the confidence scores $\mathbf{X}_{t}, \mathbf{X}_{h}$ and observed values $\mathbf{Y}_t$ and $\mathbf{Y}_h$ are not well-calibrated (even within the training set) and would benefit from recalibration.

\begin{longtable}{@{}l*{9}{>{\centering\arraybackslash}p{0.085\textwidth}}@{}}
\caption{Maximum likelihood estimates $\hat{\bm{\delta}}$ and $\hat{\bm{\gamma}}$ for probability predictions and observed values for the CIFAR-10 case study training set $\mathbf{X}_t$ and holdout set $\mathbf{X}_h$ of confidence scores, with baseline category \textit{truck}. The standard errors for each of these parameter estimators are shown in parentheses. }
\label{table:cifar_parameters}\tabularnewline
\toprule
\endfirsthead

\toprule
\endhead

\bottomrule
\endlastfoot

\multicolumn{10}{@{}l@{}}{\textbf{Image Classification Parameter Estimates that Maximize Likelihood: $\mathbf{X}_t$}} \\
$\hat{\bm{\delta}}$ & 1.461 (0.315) & 0.710 (0.135) & 1.275 (0.346) & 1.966 (0.456) & 0.908 (0.306) & 3.095 (0.769) & 1.166 (0.370) & 1.785 (0.539) & 1.111 (0.240) \\
$\hat{\bm{\gamma}}$ & 0.705 (0.028) & 0.679 (0.043) & 0.692 (0.025) & 0.711 (0.022) & 0.732 (0.028) & 0.685 (0.021) & 0.659 (0.030) & 0.729 (0.031) & 0.621 (0.030) \\
\addlinespace

\multicolumn{10}{@{}l@{}}{\textbf{Image Classification Parameter Estimates that Maximize Likelihood: $\mathbf{X}_h$}} \\
$\hat{\bm{\delta}}$ & 0.693 (0.338) & 0.665 (0.211) & 0.912 (0.496) & 1.588 (0.708) & 0.312 (0.234) & 1.613 (0.824) & 1.037 (0.635) & 0.603 (0.439) & 0.711 (0.331) \\
$\hat{\bm{\gamma}}$ & 0.746 (0.061) & 0.539 (0.060) & 0.719 (0.052) & 0.740 (0.047) & 0.872 (0.073) & 0.727 (0.046) & 0.676 (0.060) & 0.768 (0.077) & 0.679 (0.075) \\
\end{longtable}

\subsubsection{Image Classification: MCLLO Recalibration} \label{sec:CIFAR.MCLLO.recalibration} 

We recalibrate the holdout set of probability predictions as follows. We apply Equations (\ref{g_equation1}) and (\ref{g_equation2}) to the to the holdout set $\mathbf{X}_h$ along with the estimated parameters learned from training set as shown in Table \ref{table:cifar_parameters}. The matrix $\mathbf{X}_{h,MCLLO}^*$ contains the recalibrated probabilities of the holdout set of confidence scores.

MCLLO recalibration of the confidence scores  $\mathbf{X}_{h}$ can be visualized with reliability diagrams \citep{springerpaper} created with 10 bins, in which well-calibrated confidence scores  are plotted closer to the $x=y$ line. Figure \ref{fig:CIFAR_reliability} displays reliability diagrams that were created using the uncalibrated confidence scores $\mathbf{X}_h$ [Top Left], where [Top Middle] was created using the MCLLO-recalibrated confidence scores $\mathbf{X}_{h,MCLLO}^*$. A quick comparison of the two plots shows that the reliability diagram for uncalibrated confidence scores in [Left] are slightly further away from the $x=y$ line than the reliability diagram for MCLLO recalibrated confidence scores.

 We calculate the ECE of $\mathbf{X}_{h, MCLLO}^*$ to be 0.021, improved from 0.035 for the uncalibrated confidence scores $\mathbf{X}_h$. Further, the MCLLO LRT for this set of MCLLO-recalibrated probability predictions produces a test statistic of $\lambda = 24.70$, on 18 degrees of freedom corresponding to a $p$-value of 0.133. We therefore conclude that applying MCLLO recalibration to the holdout set of confidence scores yielded well calibrated probability predictions.

\subsubsection{CIFAR: MCLLO versus Comparator Methods}

We compare the MCLLO recalibrated confidence scores $\mathbf{X}_{h, MCLLO}^*$ to the all the recalibration techniques discussed in Section \ref{ section:comparatormethods}, denoting $\mathbf{X}_{h, Xu}^*, \mathbf{X}_{h, EB}^*, \mathbf{X}_{h, TS}^*$ and $\mathbf{X}_{h, VS}^*$ for the technique by \cite{xudavoine}, the extension of binning as described by \cite{guo2017calibration}, temperature scaling \citep{hinton2015temperaturescaling}, and vector scaling \citep{springerpaper}, respectively. 

Table \ref{fig:comparison_charts_cifar} lists the ECE (calculated with $\sqrt{n_t}$ bins for the training set and $\sqrt{n_h}$ bins for the holdout set)  and LRT calibration metrics calculated from $\mathbf{X}_{h,MCLLO}^*$, $\mathbf{X^*}_{h,Xu}$, $\mathbf{X^*}_{h,EB}$, $ \mathbf{X^{*}}_{h,TS}$, and $ \mathbf{X^{*}}_{h,VS}$ for the image classification confidence scores. The recalibrated probabilities via all methods report slightly higher or equal accuracy compared to the original probability predictions $\mathbf{X}_{h}$. In terms of calibration, the ECE results show that all recalibration techniques reduce ECE when compared to the original, uncalibrated confidence scores $\mathbf{X}_h$, with the exception of the technique by \cite{xudavoine}. $\mathbf{X}_{h,MCLLO}^*$ reports an ECE of 0.021, comparable to that of $\mathbf{X}_{h,TS}^*$ and $\mathbf{X}_{h,VS}^*$ at 0.020. This result is somewhat surprising, as vector scaling and MCLLO recalibration utilize $2c$ and $2(c-1)$ parameters, respectively, where temperature scaling only uses a single parameter to recalibrate the confidence scores for all the categories. We would therefore not expect temperature scaling to calibrate as well as vector scaling or MCLLO recalibration. On the other hand, the LRT $p$-values report that $\mathbf{X}_{h,MCLLO}^*, \mathbf{X}_{h, VS}^*$ and $\mathbf{X}_{h,EB}^*$ are well-calibrated, while $\mathbf{X}_{h,TS}^*$ is not. This conclusion is more reasonable than the one based on ECE results, as MCLLO recalibration and vector scaling utilize more parameters than temperature scaling.

 Figure \ref{fig:CIFAR_reliability} also shows  reliability diagrams for the recalibrated confidence scores $\mathbf{X}_{h,TS}^*$ [top right], $\mathbf{X}_{h, VS}^*$ [bottom left], $\mathbf{X}_{h, EB}^*$ [bottom middle], and $\mathbf{X}_{h, Xu}^*$ [bottom right], along with their corresponding ECE and LRT $p$-value from Table \ref{fig:comparison_charts_cifar}. It is not surprising that MCLLO and vector scaling produce confidence scores that are similarly calibrated as the two methods maximize a likelihood with respect to a similar number of parameters. From the reliability diagrams, it is clear that temperature scaling achieved remarkable calibration at a low cost of complexity: the reliability diagram representing $\mathbf{X}_{h,TS}^*$ follows the $x=y$ line and its ECE is comparable to more complex methods like MCLLO and vector scaling. However, $p$-values report that temperature scaling does not provide well-calibrated predictions. An examination of which parameter estimates used to calculate these $p$-values differ ($\bm{\delta}_4$ and $\bm{\delta}_6$) indicate that those respective categories are poorly handled by temperature scaling. Thus, the MCLLO approach shows that temperature scaling handles the \textit{dog} and \textit{cat} category poorly, with those specific categories driving the termperature scaling results towards a lack of calibraiton. See the supplementary material for more details.

\begin{figure}

\centering{

\includegraphics[height=\textheight]{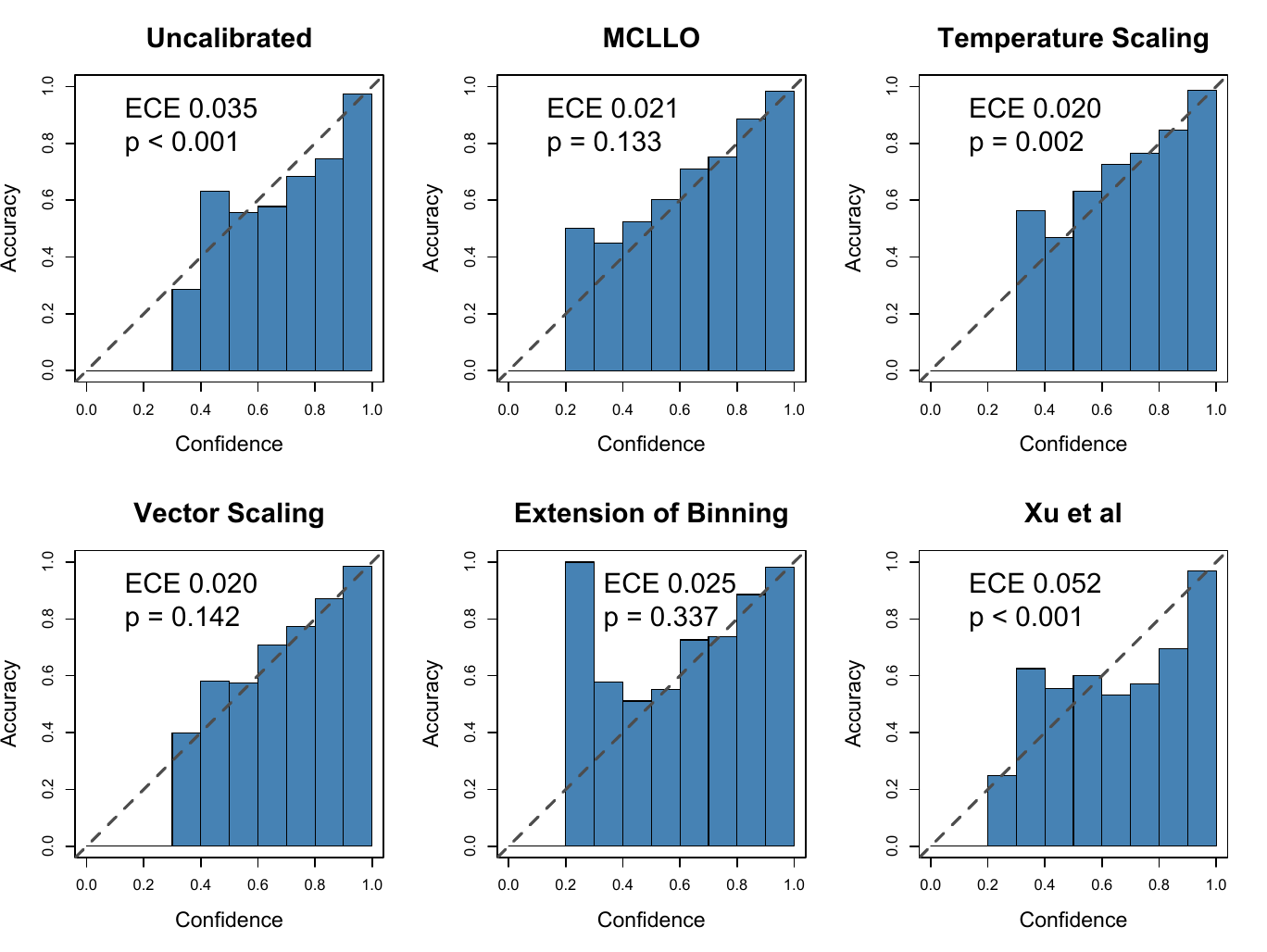}

}

\caption{\label{fig:CIFAR_reliability} Reliability diagrams for the CIFAR holdout confidence scores before [top left] and after different methods of recalibration, where well-calibrated predictions lie along the $x=y$ line. The top row shows (left to right) the original uncalibrated holdout confidence scores $\mathbf{X}_h$, MCLLO-recalibrated confidence scores $\mathbf{X}_{h,\text{MCLLO}}^*$, and temperature scaled confidence scores $\mathbf{X}_{h, TS}^*$.	The bottom row shows (left to right) vector scaled confidence scores $\mathbf{X}_{h,VS}^*$, confidence scores obtained through the extension of binning as described by \cite{guo2017calibration}, denoted $\mathbf{X}_{h, EB}^*$, and those obtained via the recalibration method by \cite{xudavoine}, denoted $\mathbf{X}_{h,Xu}^*$. }

\end{figure}

\begin{longtable}{@{}lccccc@{}}
\caption{Comparison of calibration metrics from applying recalibration techniques on CIFAR-10 confidence scores. Reported are the $p$-value from MCLLO’s likelihood ratio test, the ECE score with $\sqrt{n}$ bins, accuracy, and the percentage of first-place labels that changed after recalibration.}
\label{fig:comparison_charts_cifar}\tabularnewline
\toprule
Calibration set & $n$ & MCLLO $p$-value & ECE ($\sqrt{n}$ bins) & Accuracy & \% label change \\
\midrule
\endfirsthead

\toprule
Calibration set & $n$ & MCLLO $p$-value & ECE ($\sqrt{n}$ bins) & Accuracy & \% label change \\
\midrule
\endhead

\bottomrule
\endlastfoot

$\mathbf{X}_{t}$ & 8000 & $<0.001$ & 0.033 & 92.7\% & -- \\
$\mathbf{X}_{h}$ & 2000 & $<0.001$ & 0.035 & 93.0\% & -- \\
$\mathbf{X}_{h,\text{MCLLO}}^{*}$ & 2000 & 0.133 & 0.021 & 93.1\% & 1.95\% \\
$\mathbf{X}_{h,\text{Xu}}^{*}$ & 2000 & $<0.001$ & 0.052 & 93.0\% & 0\% \\
$\mathbf{X}_{h,\text{EB}}^{*}$ & 2000 & 0.337 & 0.025 & 93.2\% & 0.8\% \\
$\mathbf{X}_{h,\text{TS}}^{*}$ & 2000 & 0.002 & 0.020 & 93.0\% & 0\% \\
$\mathbf{X}_{h,\text{VS}}^{*}$ & 2000 & 0.142 & 0.020 & 93.1\% & 1.8\% \\
\end{longtable}

\subsection{Case Study: Obesity} \label{section:data:obesity}

\subsubsection{Random forest as predictive model}

We obtain probability predictions by fitting a random forest to a publicly available data set of 2,111 participants that examined 7 obesity classifications as a function of participant characteristics. We used a 75\% training 25\% holdout split for this analysis, resulting in probability predictions $\mathbf{X}$ for $n=528$ individuals. More detail on the implementation of this random forest can be found in the supplementary material.

\subsubsection{Recalibration of Obesity Case Study Probability Predictions}

 We recalibrate the probability predictions $\mathbf{X}$ as follows. We split $\mathbf{X}$ into an additional 75\% training and 25\% holdout split, resulting in $\mathbf{X}_{t}, \mathbf{X}_{h},$ (each with $n_{t} = 396,$ and $n_{h} = 132$ observations). We then obtain MCLLO MLEs, $\hat{\bm{\delta}}$ and $\hat{\bm{\gamma}}$, learned from the training set $\mathbf{X}_{t}$, which are displayed in Table (\ref{table:obesity_parameters}) with baseline category ``Normal Weight."  The standard errors for for these estimates, calculated similarly to those calculated in Section \ref{sec:CIFAR.MCLLO.recalibration}, are listed in parentheses. Finally, we apply MCLLO recalibration with baseline category ``Normal Weight" with the learned parameters to the holdout set $\mathbf{X}_{h}$ : in other words, we apply Equations (\ref{g_equation1}) and (\ref{g_equation2}) to the parameters in Table (\ref{table:obesity_parameters}) and the holdout set, to calculate recalibrated probabilities that are stored in the matrix $\mathbf{X}_{h, MCLLO}^*$.

\begin{longtable}{@{}l*{6}{>{\centering\arraybackslash}p{0.11\textwidth}}@{}}
\caption{ Maximum likelihood estimates $\hat{\bm{\delta}}$ and $\hat{\bm{\gamma}}$ for probability predictions and observed values for the obesity case study, with baseline category \textit{Normal Weight}. The standard errors for these estimates are also provided in parentheses. }
\label{table:obesity_parameters}\tabularnewline
\toprule
\endfirsthead

\toprule
\endhead

\bottomrule
\endlastfoot

\multicolumn{7}{@{}l@{}}{\textbf{Obesity Parameter Estimates that Maximize Likelihood: $\mathbf{X}_{t}$}} \\
$\hat{\bm{\delta}}$ & 0.527 (0.241) & 1.002 (0.406) & 2.714 (1.404) & 0.002 (0.001) & 1.167 (0.391) & 1.529 (0.524) \\
$\hat{\bm{\gamma}}$ & 2.785 (0.399) & 2.211 (0.214) & 1.719 (0.175) & 4.698 (0.960) & 2.387 (0.247) & 2.020 (0.188) \\
\end{longtable}

We similarly recalibrate the holdout probability predictions with the  technique by \cite{xudavoine} and the extension of binning as described by \cite{guo2017calibration}, learning parameter estimates on the training set and applying those estimates to the holdout set. This results in the recalibrated probability predictions $\mathbf{X}_{h, Xu}^* $ and  and $\mathbf{X}_{h, EB}^*$. Since the uncalibrated probability predictions are outputs from a random forest and pre-softmax logits are not attainable, temperature scaling and vector scaling do not apply and we can therefore not include recalibrated probability predictions via these methods in our analysis. 

\subsubsection{Results and Calibration Metrics of Obesity Probability Predictions}

Figure \ref{fig:obesity_reliability} displays the reliability diagrams \citep{springerpaper}, created with 10 bins, for $\mathbf{X}_{h}, \mathbf{X}_{h,MCLLO}^*, \mathbf{X}_{h, EB}^*,$ and $\mathbf{X}_{h,Xu}^*$, along with their corresponding ECE and $p$-values from the MCLLO LRT. Visually, the recalibrated probability predictions follow the $x=y$ line more closely than $\mathbf{X}_h$, which agrees with the displayed calibration metrics.

\begin{figure}
\centering{
\includegraphics[height=\textheight]{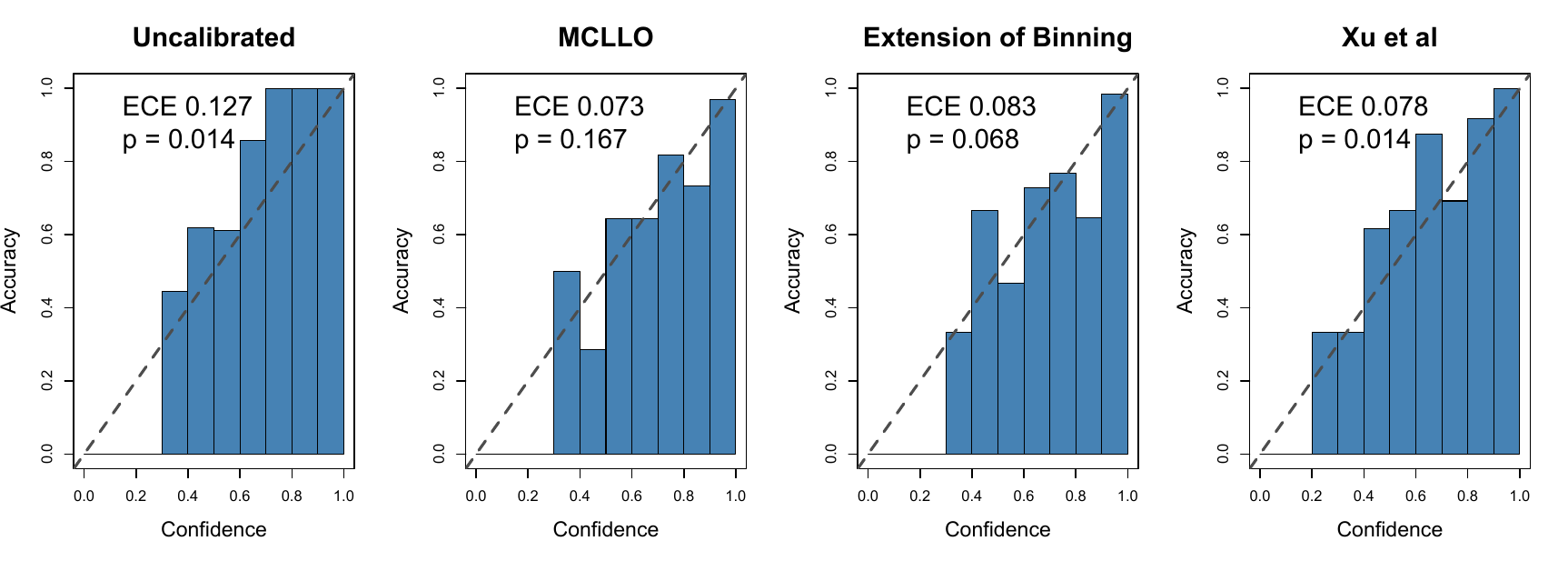}
}
\caption{\label{fig:obesity_reliability} Reliability diagrams for the obesity holdout confidence scores before [top left] and after different methods of recalibration, where well-calibrated predictions lie along the $x=y$ line. [Left] to [Right] displays diagrams for the uncalibrated holdout confidence scores $\mathbf{X}_h$, and recalibrated confidence scores $\mathbf{X}_{h,\text{MCLLO}}^*$, $\mathbf{X}_{h, EB}^*$, and $\mathbf{X}_{h,Xu}^*$. }
\end{figure}

Table (\ref{tab:obesity:calibration.metrics}) lists the calibration metrics computed for the uncalibrated and recalibrated (via MCLLO and comparator methods) probability predictions. For all sets of probability predictions, we report ECE with 20 or 12 bins (for the training and holdout sets, respectively) and MCLLO $p$-value. The MCLLO LRT concludes that the uncalibrated holdout probability predictions $\mathbf{X}_{h}$ are not well-calibrated for a traditional $\alpha = 0.05$ decision rule. The extension of binning as described by \cite{guo2017calibration} and, as expected, MCLLO recalibration, produce well-calibrated probability predictions via the MCLLO LRT. MCLLO recalibration also results in the lowest ECE than the competitor recalibration techniques. It seems noteworthy that the availability of MCLLO LRT can identify successful recalibration based on other methods. Our LRT approach can help researchers learn which recalibration strategy may be best in a given situation. Also, see the supplementary material for a discussion of the tradeoff between accuracy and calibration.

\begin{longtable}{@{}lccccc@{}}
\caption{Calibration metrics for the obesity case study. Probability predictions were obtained by applying a random forest to the obesity data set and recalibrated using MCLLO, the method of Xu et al., and an extension of binning described by Guo et al. Reported are the $p$-values from MCLLO’s likelihood ratio test, the ECE score with $\sqrt{n}$ bins, accuracy, and the percentage of first-place predicted labels that changed after recalibration.}
\label{tab:obesity:calibration.metrics}\tabularnewline
\toprule
Calibration set & $n$ & MCLLO $p$-value & ECE ($\sqrt{n}$ bins) & Accuracy & \% label change \\
\midrule
\endfirsthead

\toprule
Calibration set & $n$ & MCLLO $p$-value & ECE ($\sqrt{n}$ bins) & Accuracy & \% label change \\
\midrule
\endhead

\bottomrule
\endlastfoot

$\mathbf{X}_{t}$ & 396 & $<0.001$ & 0.160 & 85.1\% & -- \\
$\mathbf{X}_{h}$ & 132 & 0.014 & 0.127 & 78.8\% & -- \\
$\mathbf{X}_{h, \text{MCLLO}}^{*}$ & 132 & 0.167 & 0.073 & 81.1\% & 2.27\% \\
$\mathbf{X}_{h, \text{EB}}^{*}$ & 132 & 0.068 & 0.083 & 81.0\% & 3.03\% \\
$\mathbf{X}_{h,\text{Xu}}^{*}$ & 132 & 0.014 & 0.078 & 81.1\% & 3.0\% \\
\end{longtable}

\section{Discussion and Conclusion} \label{sec:discussion}

We have shown that the proposed MCLLO approach is a valuable contribution to the literature because it (i) directly tests the calibration of a single model and furnishes a recalibration strategy for faulty model predictions, (ii) does not require under-the-hood model access, e.g., accessing logit-scale predictions within the layers of a neural network, and (iii) provides output which is easy for human analysts to understand. This is important because reliable probability predictions in the multicategory realm are crucial for making effective decisions. For example, classifying an image as a ``plane" with a 40\% probability prediction versus a 60\% probability prediction may cause humans to make different decisions in this case. 

Though the general idea of hypothesis testing to identify well-calibrated probability predictions has been mentioned in the literature by \cite{Vaicenavicius2019multiclasshypothesistest}, previous literature does not fully describe the methodology of conducting a statistical hypothesis test. In particular, previous work does not explicitly define a likelihood function, a null hypothesis equivalent to well-calibrated probability predictions, or methodology to obtain a specific test statistic or distribution needed to calculate a $p$-value. Additionally, \cite{Vaicenavicius2019multiclasshypothesistest} describes assessing calibration but does not offer a solution to recalibrate a set of probability predictions. In contrast, the proposed MCLLO approach is, to our knowledge, the first fully described and demonstrated method to test whether or not a set of probability predictions are plausibly well calibrated. Further, MCLLO MLEs can be used to recalibrate faulty predictions in the event that the corresponding test rejects calibration.

The MCLLO LRT has some specific advantages over other popular metrics of calibration like ECE. First, the  LRT test statistic in Equation (\ref{eqn:LRTteststat}) controls the type I error rate and gains power as sample size or departure from calibration grows, as shown in the simulation study in the supplementary material. Second, the interpretation of ECE is limited: though lower ECE scores signify better calibration, ECE alone cannot determine with any level of significance if a set of probability predictions is well-calibrated or not. The MCLLO LRT can direct test the calibration hypothesis. Additionally, ECE does not take into account every probability prediction: instead, it only considers the maximum probability prediction of each observation, while the MCLLO LRT utilizes the probability predictions for all the categories for each observation. Finally, ECE does not prescribe a strategy to recalibrate predictions, while MCLLO does. ECE depends on a user-specified number of bins, and different choices of bin number can lead to different ECE scores. MCLLO recalibration does not require binning, thus removing one more subjective specification that may influence results. Despite these drawbacks, we note ECE is a useful and popular metric, and we have shown (see supplementary material) that our MCLLO recalibration approach also improves ECE. The metrics can be used in concert to gain the advantages of each.

The MCLLO approach includes identity map. If $\bm{\delta}=\bm{\gamma}=\bm{1}$, then $\bm{X}=\bm{G}$ and the original predictions are unchanged. I.e., an MCLLO recalibration can conceptually leave well-calibrated predictions alone. This feature is not present in some existing recalibration techniques. For example, the recalibration method by \cite{xudavoine} described in Section \ref{section:casestudies} does not include the identity map. Thus, even well calibrated predictions will be adjusted at least slightly by the \cite{xudavoine} method.

In addition to performing comparatively, and sometimes better, than temperature and vector scaling, we have demonstrated in Section \ref{section:data:obesity} and the supplementary material that MCLLO recalibration applies to a broader set of applications than temperature and vector scaling, as it does not require under-the-hood access to the logits of a model that generates probability predictions. For example, MCLLO recalibration can be used for random forests where temperature and vector scaling cannot. This is important, as our case study shows random forests are not guaranteed to produce well calibrated probability predictions (Table \ref{tab:obesity:calibration.metrics}). Thus the case studies exhibit that if a predictive model does not produce well calibrated probability predictions, MCLLO recalibration can map the outputted predictions to well-calibrated probability predictions.

We close by addressing a limitation of MCLLO recalibration: in other multicategory models, like multinomial logistic regression \citep{agresti2006multinomiallogisticregression}, baseline category choice is completely arbitrary, meaning that the likelihood, MLEs, recalibrated probabilities, and testing results are invariant to the choice of baseline category. Perhaps surprisingly, this property does not extend to MCLLO recalibration when the alternative hypothesis $H_1$ is true. The shift parameter $\bm{\delta}$  of the MCLLO formula (Equation (\ref{logodds_linear_model})) under one baseline category choice cannot be directly represented as a function of the parameters of another baseline category choice. Thus, it is possible that specification of the baseline category could affect results. Empirically, we have found these effects to be very small (see supplementary material).

\section{Disclosure statement}\label{disclosure-statement}

The authors declare no conflict of interest exists.

\section{Data Availability Statement}\label{data-availability-statement}

Data have been made available at the following URL: \textcolor{red}{This will be established upon article acceptance.}

\phantomsection\label{supplementary-material}
\bigskip

\bibliography{bibliography.bib}

\section{Appendix}

\subsection{Proofs of Lemmas and Theorems}

\begin{lemma}
     The negative log of the MCLLO likelihood in Equation (5) is convex in $\bm{\tau} = \log \bm{\delta}$ and $\bm{\gamma}$. \label{lemma}
\end{lemma}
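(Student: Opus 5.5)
The plan is to reparametrize the likelihood so that, for each observation, the negative log-likelihood becomes a log-sum-exp function composed with an affine map of $(\bm{\tau},\bm{\gamma})$. Convexity then follows from standard closure properties.

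\textbf{Step 1 (softmax form).} Throughout I assume all predictions satisfy $x_{ij}\in(0,1)$, so the log odds are finite. Set $z_{ij}=\log(x_{ij}/x_{ic})$ and $\tau_j=\log\delta_j$ for $j\le c-1$. Divide the numerator and denominator of Equation (\ref{g_equation1}) by $x_{ic}^{\gamma_j}$. This gives
\[
g_{ij}=\frac{\exp(\eta_{ij})}{1+\sum_{m=1}^{c-1}\exp(\eta_{im})},\qquad \eta_{ij}=\tau_j+\gamma_j z_{ij},
\]
for $j\le c-1$. Equation (\ref{g_equation2}) then gives $g_{ic}=1/(1+\sum_m \exp(\eta_{im}))$, which is the same formula with $\eta_{ic}=0$. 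This is consistent with Equation (\ref{logodds_linear_model}).

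\textbf{Step 2 (write the negative log-likelihood).} Each row satisfies $\sum_{j=1}^c Y_{ij}=1$, since $\mathbf{Y}_i$ is multinomial with size one. Substituting Step 1 into Equation (\ref{likelihood}) therefore yields
\[
-\ell(\bm{\tau},\bm{\gamma})=\sum_{i=1}^n\Big[\log\Big(1+\sum_{m=1}^{c-1}e^{\eta_{im}}\Big)-\sum_{j=1}^{c-1}Y_{ij}\,\eta_{ij}\Big].
\]

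\textbf{Step 3 (convexity).} The map $(\bm{\tau},\bm{\gamma})\mapsto(\eta_{i1},\dots,\eta_{i,c-1})$ is affine for each fixed $i$, since $z_{ij}$ is data.

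The function $f(\mathbf{u})=\log(1+\sum_m e^{u_m})=\log\sum_{m=0}^{c-1}e^{u_m}$ with $u_0=0$ is the log-sum-exp function, which is convex. Its Hessian is $\mathrm{diag}(\mathbf{g})-\mathbf{g}\mathbf{g}^\top$ with $\mathbf{g}=(g_{i1},\dots,g_{i,c-1})$. This matrix is positive semidefinite: it is the covariance matrix of a multinomial indicator vector, or equivalently one can apply Cauchy--Schwarz to $\sum g_m v_m^2-(\sum g_m v_m)^2$.

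A convex function composed with an affine map is convex, and the terms $-Y_{ij}\eta_{ij}$ are affine. A sum of convex functions is convex, so $-\ell$ is convex in $(\bm{\tau},\bm{\gamma})$.

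Explicitly, the Hessian of $-\ell$ is $\sum_i A_i^\top(\mathrm{diag}(\mathbf{g}_i)-\mathbf{g}_i\mathbf{g}_i^\top)A_i\succeq 0$. Here $A_i$ is the Jacobian of $\bm{\eta}_i$ with respect to $(\bm{\tau},\bm{\gamma})$; its row $j$ has a $1$ in position $\tau_j$ and $z_{ij}$ in position $\gamma_j$.

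\textbf{Main obstacle.} The only delicate point is the algebraic reduction in Step 1. One must check that the expression in Equation (\ref{g_equation1}), with its mixed powers $x_{ic}^{\gamma_j-\gamma_m}$, really collapses to the softmax of affine functions of $(\bm{\tau},\bm{\gamma})$.

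A secondary issue is predictions equal to $0$ or $1$, where $z_{ij}$ is infinite. I would exclude these by the positivity assumption, or treat them by clipping the predictions to $(0,1)$. Once the reduction holds, convexity is immediate.

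Note that convexity holds in $\bm{\tau}=\log\bm{\delta}$ and not in $\bm{\delta}$ itself, which is why the lemma is stated in that parametrization.
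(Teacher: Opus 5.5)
Your proof is correct and takes essentially the paper's approach: both treat each observation's term as softmax cross-entropy with logits $\eta_{ij}=\tau_j+\gamma_j\log(x_{ij}/x_{ic})$, linear in $(\bm{\tau},\bm{\gamma})$ (the paper writes this as $W\mathbf{x}$ with $W=[\bm{\tau}\mid\mathrm{Diag}\,\bm{\gamma}]$). The paper then cites Corollary 3 of \cite{rychlik_2021_psdHessian}, whereas you prove convexity directly via log-sum-exp and correctly include the zero baseline logit, which the paper's softmax over only the $c-1$ activations (giving $g_{ij}/(1-g_{ic})$ instead of $g_{ij}$) glosses over.
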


\begin{proof}
Choose an arbitrary $i \in \{ 1, \ldots , n\}$. Define $ W = \left[
\begin{array}{c|c}  
\bm{\tau} & \text{Diag } \bm{\gamma}
\end{array}
\right]$
and $\mathbf{x} = \begin{bmatrix}
    1 &  \log \left( \frac{x_{i1}}{x_{ic}} \right) & \cdots &  \log \left( \frac{x_{i c-1}}{x_{ic}} \right)
\end{bmatrix}^\top$, resulting in the activation vector $\mathbf{a} = W \mathbf{x} = \begin{bmatrix}
    \log \left( \frac{g_{i1}}{g_{ic}} \right) & \cdots & \log \left( \frac{g_{i c-1}}{g_{ic}} \right) 
\end{bmatrix}^\top $. Then $\mathbf{y} = \text{softmax} ( \mathbf{a} ) = \begin{bmatrix}
    \frac{g_{i1}}{1- g_{ic}} & \cdots & \frac{g_{i c-1}}{1 - g_{ic}}
\end{bmatrix}^\top $, is a parallel from \cite{rychlik_2021_psdHessian} to MCLLO recalibration. By Corollary 3 in \cite{rychlik_2021_psdHessian}, the negative log of the MCLLO likelihood is convex in $\bm{\tau}$ and $\bm{\gamma}$.
\end{proof}

\begin{theorem}
    The maximum likelihood estimators $\hat{\bm{\theta}}_n = ( \hat{\bm{\delta}_n}, \hat{\bm{\gamma}_n} )$  for the log of the MCLLO likelihood shown in Equation (5), $\ell ( \bm{\theta})$, are asymptotically consistent. In other words, 
    \begin{align*}
\hat{\bm{\theta}}_n &\xrightarrow{p} \bm{\theta},
\end{align*}
where $\hat{\bm{\theta}}_n = (\hat{\bm\delta}_n, \hat{\bm\gamma}_n)$ is the MLE based on a sample of size $n$, and $\bm{\theta}_0 = (\bm\delta_0,\bm \gamma_0)$ is the true parameter value. \label{theorem}
\end{theorem}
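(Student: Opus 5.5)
The plan is to prove consistency in the reparametrized coordinates $\bm{\eta} = (\bm{\tau}, \bm{\gamma})$ with $\bm{\tau} = \log\bm{\delta}$, where Lemma~\ref{lemma} gives convexity. I then transfer the result back to $(\bm{\delta},\bm{\gamma})$ by the continuous mapping theorem, since $\bm{\delta} = \exp(\bm{\tau})$ is continuous. The argument is the standard convexity route to M-estimator consistency (Newey--McFadden, Theorem 2.7, or the Andersen--Gill/Hjort--Pollard convexity lemma). This route avoids compactness of the parameter space, which we do not have because $\bm{\eta}$ ranges over all of $\mathbb{R}^{2(c-1)}$.

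Setup: the pairs $(\mathbf{x}_i,\mathbf{Y}_i)$ are i.i.d., and conditionally on $\mathbf{x}_i$, $\mathbf{Y}_i \sim MN(\mathbf{g}_i(\bm{\eta}_0),1)$. Let $Q_n(\bm{\eta}) = -n^{-1}\ell(\bm{\eta}) = -n^{-1}\sum_i\sum_j Y_{ij}\log g_{ij}(\bm{\eta})$ and $Q(\bm{\eta}) = E[-\sum_j Y_{1j}\log g_{1j}(\bm{\eta})]$.

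\emph{Step 1 (pointwise convergence).} For each fixed $\bm{\eta}$, the summands are i.i.d. and, by the log-sum-exp form, bounded by an affine function of $\max_j|\log(x_{1j}/x_{1c})|$. Assuming this has a finite first moment, the WLLN gives $Q_n(\bm{\eta}) \to Q(\bm{\eta})$ in probability, and $Q$ is finite.

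\emph{Step 2 (unique minimizer).} Conditioning on $\mathbf{x}$, $Q(\bm{\eta}) - Q(\bm{\eta}_0) = E\big[\mathrm{KL}\big(\mathbf{g}(\bm{\eta}_0)\,\|\,\mathbf{g}(\bm{\eta})\big)\big] \ge 0$ by Gibbs' inequality, so $\bm{\eta}_0$ minimizes $Q$. It is the unique minimizer provided the model is identified, that is, $\mathbf{g}(\bm{\eta}) = \mathbf{g}(\bm{\eta}_0)$ a.s. implies $\bm{\eta} = \bm{\eta}_0$. By Equation~(\ref{logodds_linear_model}), equal $\mathbf{g}$ means equal log-odds, so $\tau_j + \gamma_j L_j = \tau_{0j} + \gamma_{0j} L_j$ a.s. for each $j$, where $L_j = \log(x_{1j}/x_{1c})$. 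This forces equality of parameters whenever each $L_j$ is non-degenerate (not a.s. constant), which I will state as an assumption.

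\emph{Step 3 (convexity argument).} By Lemma~\ref{lemma}, each $Q_n$ is convex, and hence so is $Q$. A convex function minimized uniquely at $\bm{\eta}_0$, with pointwise convergent convex approximants, falls under Rockafellar's Theorem~10.8: pointwise convergence in probability of convex functions on an open convex set upgrades to uniform convergence on compacts, via a subsequence/diagonal argument over a countable dense set. Fix a small closed ball $B$ around $\bm{\eta}_0$, and let $\epsilon = \min_{\partial B} Q - Q(\bm{\eta}_0) > 0$. With probability tending to one, $\min_{\partial B} Q_n > Q_n(\bm{\eta}_0)$. By convexity, no point outside $B$ can then beat $\bm{\eta}_0$: any such point lies on a segment from $\bm{\eta}_0$ crossing $\partial B$, and the convexity inequality along that segment rules it out. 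Therefore the minimizer $\hat{\bm{\eta}}_n$ exists in $B$ and lies in $B$ with probability tending to one. Since $B$ is arbitrary, $\hat{\bm{\eta}}_n \xrightarrow{p} \bm{\eta}_0$.

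The main obstacle is Step 2 together with the moment conditions. The theorem as stated is silent on the data-generating assumptions, so identifiability (non-degenerate log-odds) and integrability of the log-odds must be imposed explicitly. Identifiability is also what upgrades convexity to a strict minimum of $Q$. I would try to show this through the Hessian $E[\nabla^2 Q_1]$ at $\bm{\eta}_0$: it has the form $E[\mathbf{J}^\top(\mathrm{diag}\,\mathbf{g} - \mathbf{g}\mathbf{g}^\top)\mathbf{J}]$, where $\mathbf{J}$ is the Jacobian of the activations with respect to $\bm{\eta}$, and it is positive definite exactly under that non-degeneracy condition.
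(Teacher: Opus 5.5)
Your proof is correct under the assumptions you make explicit (i.i.d.\ pairs, correct specification, $E\max_j|L_j|<\infty$, each $L_j$ non-degenerate), and it follows a genuinely different route from the paper.

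The paper argues through asymptotic normality in the original $(\bm{\delta},\bm{\gamma})$ coordinates. It Taylor-expands the score around $\bm{\theta}_0$ at the MLE, sends $n^{-1}$ times the Hessian to $-\mathcal{I}(\bm{\theta}_0)$ by the LLN, applies the CLT to the score and Slutsky to get a $\sqrt{n}$ normal limit, and reads off consistency. Lemma~1 enters only as positive semi-definiteness of the Hessian, used to justify a (generalized) inverse.

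If completed, that route gives more than consistency: it gives the rate and covariance behind the Hessian-based standard errors and the $\chi^2_{2(c-1)}$ calibration of the LRT. But it needs consistency as an input. The Hessian is evaluated at an intermediate point $\bm{\theta}^*$ between $\hat{\bm{\theta}}_n$ and $\bm{\theta}_0$, and replacing it by its limit at $\bm{\theta}_0$ requires $\bm{\theta}^*\xrightarrow{p}\bm{\theta}_0$. It also needs $\mathcal{I}(\bm{\theta}_0)$ nonsingular and the MLE to exist, and neither is argued.

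Your approach supplies these pieces. You use Lemma~1 in the coordinates where it holds: convexity is in $(\bm{\tau},\bm{\gamma})$ and need not hold in $(\bm{\delta},\bm{\gamma})$. Convexity lets a pointwise WLLN under a first-moment condition replace compactness and a uniform LLN on the non-compact $\mathbb{R}^{2(c-1)}$. Existence of the minimizer with probability tending to one then comes for free, and the continuous map $\bm{\delta}=e^{\bm{\tau}}$ transfers the result back.

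You also surface the identifiability condition the paper leaves implicit. It is genuinely necessary: if some $L_j$ is a.s.\ constant, $\tau_j$ and $\gamma_j$ cannot be separated. It is the same non-degeneracy that makes $\mathcal{I}(\bm{\theta}_0)$ positive definite, so your consistency result is exactly the first step that would make the paper's expansion a valid asymptotic-normality argument.

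Your final Hessian paragraph is unnecessary. The KL identity plus identifiability in Step~2 already gives the unique minimizer, whereas the Hessian route would typically need an extra second-moment condition on the $L_j$.
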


\begin{proof}
We verify two assumptions required for this proof. First, the negative log of the MCLLO likelihood function is
\begin{align}
   - \ell ( \bm{\theta}) &= - \sum_{j=1}^c \sum_{i=1}^n y_{ij} \log \delta_j - \sum_{j=1}^c \sum_{i=1}^n y_{ij} \gamma_j \log x_{ij} - \sum_{j=1}^c \sum_{i=1}^n y_{ij} \left( \sum_{z=1}^c \gamma_z - \gamma_j \right) \log x_{ic} \\
    & + \sum_{i=1}^n \left[ \left( \sum_{j=1}^n y_{ij} \right) \log \left( \sum_{m=1}^c \delta_m x_{im}^{\gamma_m} x_{ic}^{\sum_{z=1}^c \gamma_z - \gamma_m} \right) \right] ,
\end{align}

\noindent and therefore is twice continuously differentiable. Second, the expected value of the score function (i.e., the gradient of the negative log-likelihood function) at the true parameter value is zero: $\mathbb{E}\left[ - \frac{\partial \ell(\theta_0)}{\partial \theta}\right] = 0$. This is a property of the likelihood function as an objective function that has a minimum value.

By definition of a minimum, which is numerically approximated by the Broyden–Fletcher–Goldfarb–Shannon (BFGS) algorithm \cite{nocedal_bfgs} , the MLE $\hat{\bm \theta}_n$ satisfies the first-order condition:

\begin{align*}
- \frac{\partial \ell(\hat{\bm \theta}_n)}{\partial \bm \theta} &= 0.
\end{align*}

\noindent Using a Taylor series expansion of the score function around the true parameter value $\bm \theta_0$ and evaluating it at the MLE $\hat{\bm \theta}_n$, 
\begin{align*}
0 &= - \frac{\partial \ell(\hat{\bm \theta}_n)}{\partial \bm \theta} \\
&= - \frac{\partial \ell(\bm \theta_0)}{\partial \bm \theta} - \frac{\partial^2 \ell(\bm \theta^\ast)}{\partial  \bm\theta \partial \bm \theta^T}(\hat{\bm \theta}_n -\bm \theta_0)
\end{align*}
where $\bm \theta^*$ is a point between $\hat{\bm \theta}_n$ and $\bm \theta_0$, $\frac{\partial^2 \ell( \bm \theta^\ast)}{\partial \bm \theta \partial \bm \theta^T}$ is a matrix with $ij$th entry $\frac{\partial^2 \ell(\bm \theta^\ast)}{\partial \bm \theta_i \partial \bm \theta_j}$.

Rearrange the terms and multiply both sides by $\sqrt{n}$. If the Hessian matrix $- \frac{\partial^2 \ell(\bm\theta)}{\partial \theta \partial \theta^T}$ of $- \ell (\bm{\theta})$ is positive definite, use the inverse. Otherwise, by Lemma \ref{lemma}  the Hessian matrix is positive semi-definite and we use the generalized inverse to obtain
\begin{align*}
\sqrt{n}(\hat{\bm\theta}_n - \bm\theta_0) &= -\left(\frac{1}{n}\frac{\partial^2 \ell(\bm\theta)}{\partial \bm\theta \partial \bm\theta^T}\right)^{-}\left(\frac{1}{\sqrt{n}}\frac{\partial \ell(\bm\theta_0)}{\partial \bm\theta}\right)
\end{align*}

By Lemma \ref{lemma}, the Hessian matrix $- \frac{\partial^2 \ell(\bm\theta)}{\partial \bm\theta \partial \bm\theta^T}$ is positive semi-definite, which implies that all of its eigenvalues are non-negative. This justifies that $- \frac{1}{n}\frac{\partial^2 \ell(\bm\theta)}{\partial \bm\theta \partial \bm\theta^T}$ converges to the  Fisher information matrix $-\mathcal{I}(\bm\theta_0)$ as the non-negative values along the diagonal of this matrix are valid estimates for the variances of the parameters. Thus by the Law of Large Numbers (LLN),
\begin{align*}
\frac{1}{n}\frac{\partial^2 \ell(\bm\theta)}{\partial \bm\theta \partial \bm\theta^T} &\xrightarrow{p} -\mathcal{I}(\bm\theta_0)
\end{align*}

and the Central Limit Theorem (CLT),
\begin{align*}
   - \frac{1}{\sqrt{n}}\frac{\partial \ell(\bm\theta_0)}{\partial \bm\theta} &\xrightarrow{d} \mathcal{N}(0, \mathcal{I}(\bm\theta_0)).
\end{align*}

Applying Slutsky's theorem,
\begin{align*}
- \sqrt{n}(\hat{\bm\theta}_n - \bm\theta_0) &\xrightarrow{d} \mathcal{N}(0, \mathcal{I}^{-1}(\bm\theta_0)),
\end{align*}
which implies that $\hat{\bm\theta}_n \xrightarrow{p} \bm\theta_0$.
\end{proof}

\end{document}